\documentclass[transaction]{IEEEtran}
\usepackage{amsfonts}
\usepackage{amssymb}
\usepackage{hyperref}
\usepackage{graphicx}
\usepackage{enumerate}
\usepackage{amsmath}
\usepackage{color}
\usepackage{amsthm}
\usepackage{verbatim}
\usepackage{amsmath}
\usepackage{algorithm}
\usepackage{cite}
\usepackage{algpseudocode}
\usepackage{subfig}
\usepackage{graphicx}
\usepackage{float}
\usepackage{autobreak}
\usepackage{booktabs}  
\usepackage{array}   
\usepackage{xcolor}  
\usepackage{amsfonts,amssymb}
\hyphenation{since}
\providecommand{\keywords}[1]{\textbf{\textit{Index terms---}} #1}

\newcommand{\bp}{\begin{proof} \small }
\newcommand{\ep}{\end{proof} \normalsize}
\newcommand{\epx}{\end{proof} \small}
\newcommand{\bpa}{\begin{proofappx} \footnotesize }
\newcommand{\epa}{\end{proofappx} \small }
\newtheorem{theorem}{Theorem}

\newtheorem{assumption}{Assumption}

\newtheorem*{theorem*}{Theorem}
\newtheorem*{proposition*}{Proposition}
\newtheorem*{corollary*}{Corollary}
\newtheorem*{lemma*}{Lemma}
\newtheorem*{assumption*}{Assumption}
\newtheorem*{definition*}{Definition}
\newtheorem*{claim*}{Claim}

\newcommand{\be}{\begin{equation}}
\newcommand{\ee}{\end{equation}}
\newcommand{\bs}{\begin{subequations}}
\newcommand{\es}{\end{subequations}}
\newcommand{\bq}{\begin{eqnarray}}
\newcommand{\eq}{\end{eqnarray}}
\newcommand{\bqn}{\begin{eqnarray*}}
\newcommand{\eqn}{\end{eqnarray*}}

\newcommand{\ba}{\left[ \begin{array}}
\newcommand{\ea}{\\ \end{array} \right]}
\newcommand{\ben}{\begin{enumerate}}
\newcommand{\een}{\end{enumerate}}

\def\real{{\mathchoice%
{\hbox{\rm\setbox1=\hbox{I}\copy1\kern-.45\wd1 R}}
{\hbox{\rm\setbox1=\hbox{I}\copy1\kern-.45\wd1 R}}
{\hbox{\scriptsize\rm\setbox1=\hbox{I}\copy1\kern-.45\wd1 R}}
{\hbox{\scriptsize\rm\setbox1=\hbox{I}\copy1\kern-.45\wd1 R}}}}

\def\Zint{{\mathchoice{\setbox1=\hbox{\sf Z}\copy1\kern-.75\wd1\box1}
{\setbox1=\hbox{\sf Z}\copy1\kern-.75\wd1\box1}
{\setbox1=\hbox{\scriptsize\sf Z}\copy1\kern-.75\wd1\box1}
{\setbox1=\hbox{\scriptsize\sf Z}\copy1\kern-.75\wd1\box1}}}
\newcommand{\complex}{ \hbox{\rm C\kern-0.45em\rule[.07em]{.02em}{.58em}%
\kern 0.43em}}

\allowdisplaybreaks

\begin{document}
%

%
%
%
\title{Multimodal Online Federated Learning with Modality Missing in Internet of Things}

\author{{Heqiang Wang, Xiang Liu, Xiaoxiong Zhong, Lixing Chen, Fangming Liu, Weizhe Zhang} 
\thanks{
H. Wang, X. Liu, X. Zhong, F. Liu and W. Zhang are with Peng Cheng Laboratory, Shenzhen, 518066, China.
L. Chen is with Shanghai Jiao Tong University, Shanghai, 200240, China.} 
\thanks{Corresponding Authors: Xiang Liu, Xiaoxiong Zhong.}
} 

\maketitle

\begin{abstract}
The Internet of Things (IoT) ecosystem generates vast amounts of multimodal data from heterogeneous sources such as sensors, cameras, and microphones. As edge intelligence continues to evolve, IoT devices have progressed from simple data collection units to nodes capable of executing complex computational tasks. This evolution necessitates the adoption of distributed learning strategies to effectively handle multimodal data in an IoT environment. Furthermore, the real-time nature of data collection and limited local storage on edge devices in IoT call for an online learning paradigm. To address these challenges, we introduce the concept of Multimodal Online Federated Learning (MMO-FL), a novel framework designed for dynamic and decentralized multimodal learning in IoT environments. Building on this framework, we further account for the inherent instability of edge devices, which frequently results in missing modalities during the learning process. We conduct a comprehensive theoretical analysis under both complete and missing modality scenarios, providing insights into the performance degradation caused by missing modalities. To mitigate the impact of modality missing, we propose the Prototypical Modality Mitigation (PMM) algorithm, which leverages prototype learning to effectively compensate for missing modalities. Experimental results on two multimodal datasets further demonstrate the superior performance of PMM compared to benchmarks.
\end{abstract}

\keywords{Federated Learning, Multimodal Learning, Online Learning, Internet of Thing, Modality Missing.}

%
\IEEEpeerreviewmaketitle

\section{Introduction}
The rapid expansion of the Internet of Things (IoT) \cite{atzori2010internet} has led to an unprecedented surge in data generated by a multitude of interconnected devices, including smart home appliances \cite{wang2023local}, wearable health monitors \cite{wu2020rigid}, and industry sensors \cite{wang2025denoising}. To enable intelligent services and applications across the IoT ecosystem, artificial intelligence techniques, particularly machine learning and deep learning, has become a fundamental tool for model training on large-scale IoT data. Traditionally, such training has been performed in centralized cloud platforms or data centers. However, this centralized paradigm faces significant challenges as both the scale of IoT data and the number of IoT devices continue to expand. Transferring large volumes of raw data to centralized servers imposes significant demands on network bandwidth and leads to substantial communication overhead, rendering it impractical for latency-sensitive applications such as autonomous driving \cite{levinson2011towards} and real-time healthcare monitoring \cite{al2018context}. Additionally, uploading sensitive user data to the cloud raises serious privacy concerns \cite{xiao2012security}. With the gradual evolution of IoT devices from mere data collectors to intelligent edge nodes, there is increasing potential to harness their computational capabilities to address the scalability and efficiency challenges of massive IoT deployments. In this context, federated learning (FL) \cite{lim2020federated} has emerged as a promising distributed learning paradigm. FL enables collaborative model training across devices while keeping raw data local, offering a cost-effective and privacy-preserving alternative to traditional centralized learning. By significantly reducing data transmission and ensuring local data privacy, FL presents a natural and scalable solution for deploying intelligent applications in IoT environments.

Conventional FL frameworks for IoT have primarily been designed for unimodal data. However, in practice, IoT environments are inherently multimodal, involving a diverse range of sensors that generate data across multiple modalities \cite{huang2019multimodal}, such as images from cameras, audio from microphones, and structured text or signals from various sensors. This rich multimodal data provides a more comprehensive and informative representation. To address this reality, multimodal federated learning (MFL) \cite{che2023multimodal} has emerged as a natural extension of FL, aiming to enable collaborative learning across distributed multimodal data sources. A typical approach in MFL involves deploying modality-specific encoder networks, each tailored to a particular data type, to extract meaningful feature representations from high-dimensional raw inputs. These features are then fused and passed through a head encoder, usually composed of deep neural network (DNN) layers followed by a softmax classifier, to produce the final prediction.

\begin{figure}[htp]
\vspace{-5pt}
\centering
\subfloat{\includegraphics[width=1\linewidth]{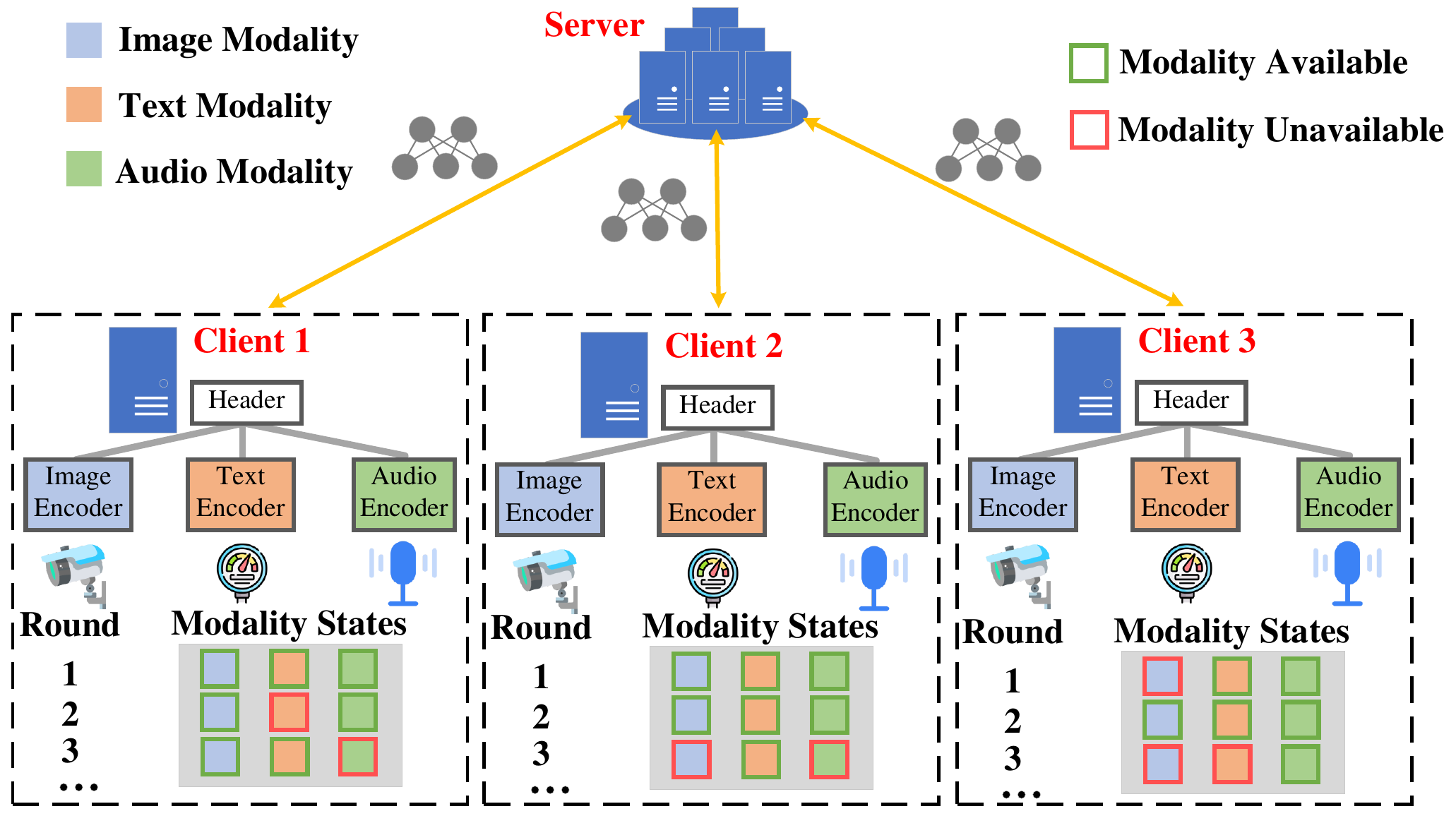}} 
\caption{IoT-Based MMO-FL with Modality Missing}  
\label{mmo_fl}
\vspace{-5pt}
\end{figure}

While traditional MFL has primarily focused on offline settings with fixed datasets, real-world IoT applications are inherently dynamic. IoT devices operate continuously, generating streaming data in real time, which makes online learning a more realistic paradigm. Furthermore, due to the instability failures of IoT sensors, certain modalities may be missing during data collection, introducing additional challenges. To address these issues, we introduce the concept of Multimodal Online Federated Learning (MMO-FL) and investigate solutions for handling modality missing in this scenario. An overview of the MMO-FL framework with missing modalities is illustrated in Fig.~\ref{mmo_fl}. Given the resource constraints of edge devices in IoT, conventional modality reconstruction techniques, such as those relying on transformers \cite{xiong2023client} or large pre-trained models \cite{che2024leveraging} are often computationally prohibitive. Therefore, it becomes critical to develop efficient modality mitigation strategies that are compatible with the limited computational capabilities of edge devices, enabling robust and scalable deployment in IoT scenarios. Our primary contributions are summarized as follows:

\begin{enumerate}
    \item We introduce the concept of MMO-FL, designed for IoT scenarios to address the challenges of FL with multimodal and streaming data. Building upon this foundation, we further investigate the problem of missing modalities, a prevalent issue in IoT environments caused by sensor instability and intermittent data acquisition failures.
    
    \item We provide an in-depth theoretical analysis of MMO-FL, investigating the dynamics of online learning and multimodal learning within the FL framework. In particular, we compare the scenarios with and without missing modalities, highlighting their differences in terms of regret bounds.
    
    \item To address the modality missing problem, we propose the Prototypical Modality Mitigation (PMM) algorithm, specifically designed to handle missing modality data in MMO-FL. Inspired by prototype learning, PMM is tailored to support dynamic data streams by enabling effective construction and substitution of modality-specific prototypes throughout the learning process.
    
    \item We evaluate the proposed PMM algorithm within the MMO-FL framework using two multimodal datasets: UCI-HAR and MVSA-Single. Experimental results demonstrate that PMM consistently outperforms baseline methods in effectively addressing the missing modality problem, achieving superior overall learning performance.
\end{enumerate}

The rest of this paper is organized as follows. Section II reviews related work on Online FL, MFL, and Prototype FL. Section III presents the system model and problem formulation. In Section IV, we describe the workflow of MMO-FL in the presence of missing modalities. Section V provides a theoretical regret analysis of the MMO-FL framework. Section VI details the proposed PMM algorithm. Experimental results are discussed in Section VII. Finally, Section VIII concludes the paper.

\section{Related Work}
\subsection{Online Federated Learning}
Online learning is designed to process data sequentially and update models incrementally, making it well-suited for applications involving continuously arriving data and the need for real-time model adaptation \cite{sahoo2017online}. These methods offer computational efficiency and eliminate the requirement of having access to the full dataset in advance, rendering them particularly suitable for memory-constrained IoT environments. In the context of FL, online federated learning (OFL) has emerged as a promising paradigm that extends online learning principles to distributed networks of decentralized learners \cite{hong2021communication}. A distinguishing feature of OFL, compared to traditional offline FL, is its emphasis on minimizing long-term cumulative regret rather than static optimization objectives during local updates. Although OFL remains relatively underexplored, several notable studies have recently advanced the field. For instance, \cite{kwon2023tighter} proposes a communication-efficient OFL algorithm that balances reduced communication overhead with strong learning performance. Similarly, \cite{mitra2021online} introduces FedOMD, an OFL method designed for uncertain environments, capable of handling streaming data without relying on assumptions about loss function distributions. While these works focus primarily on the horizontal federated learning (HFL) setting, \cite{wang2023online} explores the vertical federated learning (VFL) context, proposing an online VFL framework tailored to cooperative spectrum sensing and achieving sublinear regret. Further extending to real-world industrial applications, \cite{wang2025denoising} addresses challenges such as noise interference and device heterogeneity in online VFL systems. However, all the aforementioned approaches are limited to unimodal online federated learning. In practice, multimodal data is pervasive in IoT applications, where information from diverse types of sensors must be jointly leveraged. To bridge this gap, this work pioneers the study of MMO-FL, with the goal of enhancing the robustness and applicability of online federated learning in complex, multimodal IoT environments.

\subsection{Multimodal Federated Learning}
MFL aims to train task-relevant models on multimodal data distributed across multiple clients, thereby enabling the effective utilization of diverse data sources. With the growing interest in MFL, a variety of algorithms have been proposed to address its unique challenges and improve learning performance. One prominent challenge is modality heterogeneity, where different clients possess access to varying subsets of modalities. This inconsistency complicates model aggregation and hampers effective knowledge sharing. Several studies, such as \cite{ouyang2023harmony, chen2024feddat, chen2022fedmsplit}, have explored strategies for heterogeneous modality fusion to address this issue. Another critical challenge involves optimizing modality selection for training under constrained computational and communication resources. To tackle this, \cite{bian2024prioritizing} proposes MPriorityFed, an adaptive resource allocation framework that improves computational efficiency by prioritizing modality encoders based on their relevance and training requirements. Beyond above two challenges, a particularly pressing challenge in MFL is maintaining robust performance in the presence of missing modalities \cite{ma2021smil}. Missing data can result from incomplete data collection \cite{zhang2022m3care}, sensor failures \cite{maheshwari2024missing}, or privacy restrictions \cite{ma2021smil}, all of which degrade the effectiveness of conventional MFL frameworks. For example, \cite{wang2025clusmfl} introduces a cluster-enhanced method that utilizes feature clustering to address missing modalities in brain imaging analysis, while \cite{le2025cross} proposes the MFCPL framework, which leverages cross-modal prototypes to enhance knowledge transfer at both modality-shared and modality-specific levels. Despite these advancements, existing approaches are predominantly designed for offline learning scenarios with static datasets. They fail to address the challenges of online learning settings, where data arrives sequentially and models must adapt in real time. To bridge this gap, our work focuses on developing effective modality mitigation algorithms specifically tailored for MMO-FL, aiming to mitigate the impact of missing modalities in dynamic and distributed IoT scenarios.

\subsection{Prototype Federated Learning}
Prototype Federated Learning (PFL) has emerged as a promising solution for addressing data heterogeneity and personalization challenges in federated learning. Unlike conventional FL approaches that rely on aggregating model parameters or gradients, PFL focuses on exchanging class-level prototypes, which are representative feature embeddings of data classes, between clients and the central server. This prototype-sharing strategy enhances communication efficiency and improves model robustness, particularly under non-IID data distributions \cite{zhang2024fedgmkd, tan2022fedproto, zhou2025fedsa, tan2022federated}. A representative example is FedProto \cite{tan2022fedproto} introduces a framework in which clients compute local class prototypes and transmit them to the server. The server then aggregates these into global prototypes and redistributes them to clients, thereby aligning local updates with global objectives and mitigating the adverse effects of data heterogeneity. Extending this concept, FedGPD \cite{wu2024global} incorporates knowledge distillation, using global prototypes as distilled information to guide local training. This method eliminates the dependency on public proxy datasets and enhances generalization across clients with diverse data distributions. Furthermore, FedGMKD \cite{zhang2024fedgmkd} advances prototype learning by modeling prototype features using Gaussian Mixture Models and applying discrepancy-aware aggregation, which dynamically weights client contributions based on both data quality and volume, resulting in improved global performance under heterogeneous conditions. Beyond tackling heterogeneity, recent studies have demonstrated that prototype learning is also effective for addressing modality missing problems, especially in centralized learning scenarios \cite{jin2023rethinking, li2024correlation} and some distributed learning scenarios \cite{bao2023multimodal}. However, existing PFL research is confined to offline learning, with limited exploration of online learning, particularly in OFL contexts. OFL introduces unique challenges such as streaming data and non-stationary distributions, which require new adaptations of prototype-based strategies. Addressing the modality missing problem through prototype learning within the context of MMO-FL remains an open and important direction for future research.

\section{System Model }
Before presenting the details of the system model, we first summarize the key notations used later in Table~\ref{table1}.

\begin{table}[ht]
\vspace{-5pt}
\centering
\caption{Key Notations}
\begin{tabular}{cc}
\toprule
\textbf{Symbol} & \textbf{Semantics} \\
\midrule
$K$ & The number of clients \\
$M$ & The number of modalities \\
$T$ & The number of global rounds \\
$N$ & The number of data samples collected by client\\
$E$ & The number of local iterations \\
$C$ & The number of classes \\
$\theta^m$ & The modality encoder\\
$\theta^0$ & The head encoder\\
$\eta$ & The learning rate \\
$Z$ & The feature extractor\\
$\Theta$ & The overall model \\
$\mathcal{S}_t$ & The set of clients without modality missing\\
$\textbf{G}_k^t$ & The overall gradient \\
$\beta_k^t$ & The proportion of available modalities \\
$p_{c}^{t, m}$ & The temporal global prototype  \\
$\bar{p}_{c}^{t, m}$ & The consistent global prototype \\
$\bar{\mathcal{P}}^t$ & The consistent global prototype collection \\
$\lambda$ & The ratio of modality missing occurs\\
$\alpha$ & The Non-IID level \\
\bottomrule
\end{tabular}
\label{table1}
\vspace{-5pt}
\end{table}

Consider an IoT-based smart factory scenario consisting of a cloud server and $K$ workstations, each acting as a client. Each workstation is equipped with a diverse set of sensors to monitor factory conditions across multiple modalities at different locations, including vision sensors, acoustic sensors, and temperature and humidity sensors, as illustrated in Fig.~\ref{mmo_fl}. Additionally, given the real-time data collection from these sensors, the objective is to enable cooperative training across the clients on a unified global model using multimodal streaming data. This setting forms the foundational structure of the MMO-FL problem. During the training process, the sensors at each client continuously collect new multimodal data over time, with the timeline divided into discrete periods, denoted as $t = 1, 2, ..., T$. For simplicity, each time period is also treated as a global round.

In each global round $t$, each client $k \in \mathcal{K}$ collects the current local training dataset consisting of $N$ data samples from $M$ modalities with no missing modalities. The dataset is denoted as $ \mathcal{D}_k^t = \left ( X_k^{t, 1}, X_k^{t, 2}, \dots, X_k^{t, M}; Y_k^t \right ) = \left\{ \left ( x_{k, n}^{t, 1}, x_{k, n}^{t, 2}, \dots, x_{k, n}^{t, M}; y_{k, n}^t \right )\right\}_{n=1}^{\left | \mathcal{D}_k^t\right |}$. Here, $x_{k, n}^{t, m}$ represents the $m^{th}$ modality data of the $n^{th}$ sample in client $k$ collected at global round $t$, and $y_{k, n}^t$ denotes the corresponding label. We also define $\mathcal{D}^t = \sum_{k=1}^K \mathcal{D}_k^t$ as overall dataset aggregated across all clients at global round $t$. Without loss of generality, we assume that each client $k$ collects exactly $N$ training samples per global round $t$.

In the MMO-FL, the overall model that needs to be trained by each client and aggregated at the server is divided into two components: the modality encoders $\theta^1, \dots, \theta^M$, which extract the feature-level information from the raw data, and the head encoder $\theta^0$, which integrates these features to generate the final prediction. Each modality encoder $\theta^m$ may adopt a distinct architecture tailored to the characteristics of the corresponding modality. The feature vector for the $m^{th}$ modality of client $k$ at round $t$, extracted by modality encoder $\theta^m$, is denoted as $ Z_k^{t, m} = \theta^m(X_k^{t, m} )$. The header encoder $\theta^0$ then synthesizes the outputs $Z_k^{t, m}$ from each modality encoder to fulfill the final learning objective.  Based on these definitions, the loss function for the collective training dataset at round $t$ is expressed as follows: 
\begin{align}
F_t(\Theta, \mathcal{D}^t)  = \frac{1}{K}\sum_{k=1}^K f_t\left (\theta^0 \left (Z_k^{t, 1}, \dots, Z_k^{t, M}  \right ), Y_k^t \right)
\end{align}
where $\Theta = \left\{\theta^0, \theta^1, \dots, \theta^M \right\}$ represents the overall model. The term $\theta^0 \left (Z_k^{t, 1}, \dots, Z_k^{t, M}  \right )$ denotes the predicted labels through head encoder, and $f_t$ is the loss function that measures the discrepancy between the predicted labels and the actual labels. It is important to note that the above loss function corresponds to a single global round.

Since the training process is based on dynamically collected real-time data rather than a static dataset, adopting an online learning paradigm is essential. Let the overall model at each global round be represented as $\Theta^1, \ldots, \Theta^T$. The learning regret, $\text{Reg}_T$ is defined to quantify the gap between the cumulative loss incurred by the learner and the cumulative loss of an optimal fixed model in hindsight. Specifically:
\begin{align}
  \text{Reg}_T = \sum_{t=1}^{T}  F_t (\Theta^t; \mathcal{D}^t ) - \sum_{t=1}^{T} F_t (\Theta^*; \mathcal{D}^t) \label{regret}
\end{align}
Where $\Theta^* = \arg\min_\Theta \sum_{t=1}^{T} F_t (\Theta; \mathcal{D}^t)$ represents the optimal fixed model selected in hindsight. Our objective is to minimize the learning regret, which equates to minimizing the cumulative loss. Importantly, if the learning regret grows sublinearly with respect to $T$, it indicates that the online learning algorithm can progressively reduce the training loss asymptotically.

The fixed optimal strategy in hindsight refers to an idealized solution determined by a centralized entity with complete prior knowledge of all per-round loss functions. In our problem, achieving such an optimal strategy would require access to future information, including the entirety of the streaming datasets collected in subsequent rounds. However, due to the stochastic and unpredictable nature of real-time data collection, this information is inherently unavailable in practice. Consequently, the full loss functions are not known in advance and evolve dynamically over time. Therefore, regret just serves as a metric to quantify the performance gap between the proposed algorithm and the theoretical optimal strategy in our theoretical analysis. For experimental validation, we will evaluate the effectiveness of our algorithm using practical metrics such as test accuracy.

\section{The Workflow of MMO-FL with Missing Modalities }
In this section, we present the overall workflow of the MMO-FL framework and analyze the challenges introduced by missing modalities during the learning process. This discussion serves as the foundation for the subsequent theoretical analysis and the development of algorithms aimed at mitigating the impact of modality missing. We place particular emphasis on highlighting the distinctions between MMO-FL and conventional unimodal FL approaches.

MMO-FL builds upon the HFL framework, where each client collects independent data samples and the server is responsible for aggregating the global model. In conventional HFL, clients share the same feature space but have different sample spaces. However, in the case of missing modalities, the feature space across clients may not remain consistent during the whole learning process due to the presence of modality missing, differing from the standard HFL framework. During each global round $t \in \mathcal{T}$, where $\mathcal{T} = \{ 0, 1, 2, \dots, T-1 \}$, all clients execute a fixed number of local training iterations, represented by the parameter $E$. The index $\tau = 0, 1, 2, ..., E$ is used to track these local iterations. Then each global round $t$ consists of a sequence of coordinated steps carried out across clients and the server. A visual illustration of the timeline for a single global round is provided in Fig.~\ref{timeline}.

\begin{figure}[htp]
\vspace{-5pt}
\centering
\subfloat{\includegraphics[width=1\linewidth]{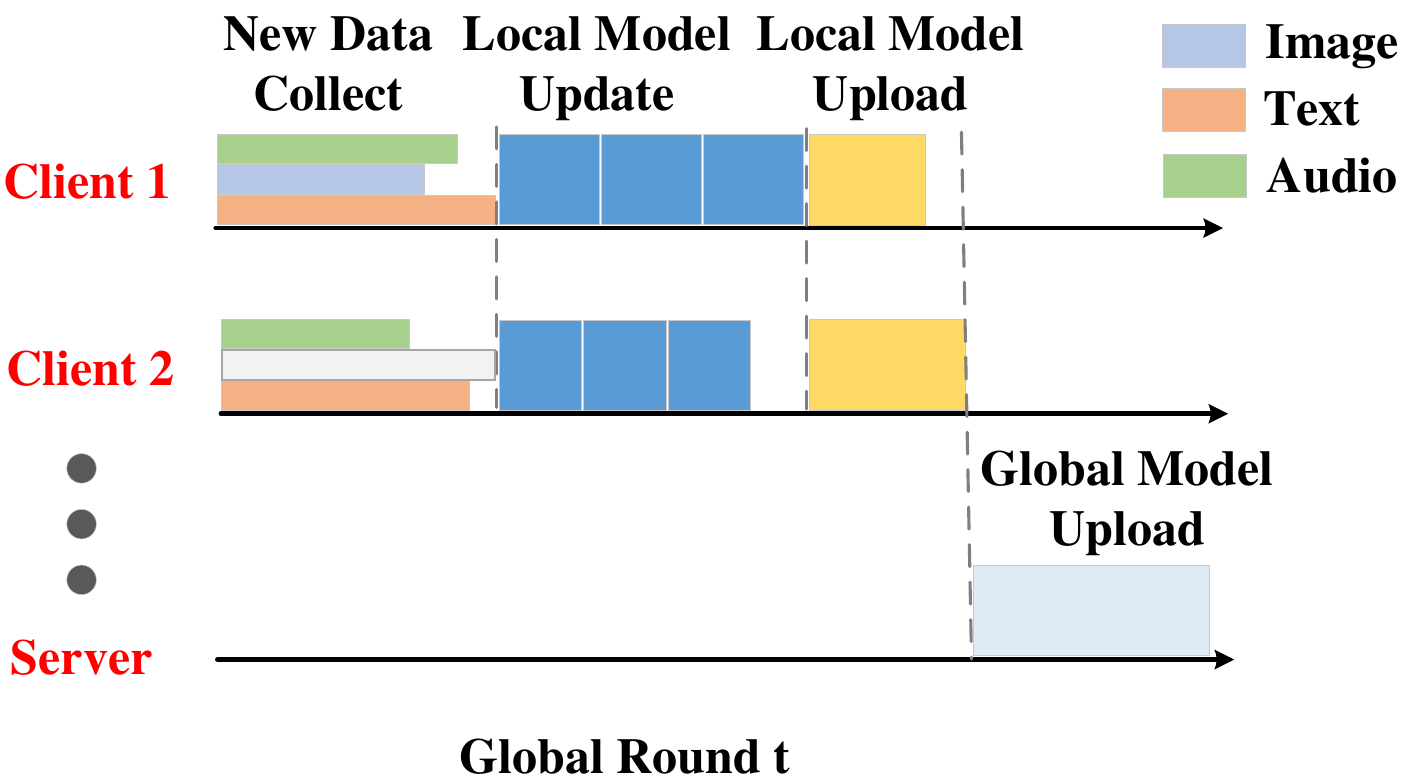}} 
\caption{The time diagram of MMO-FL in one global round }  
\label{timeline}
\vspace{-5pt}
\end{figure}

\subsubsection{\textbf{Client - New Data Collect}} At the beginning of each global round $t$, each client $k$ collects new local training data $ \mathcal{D}_k^t$, utilizing the available set of modalities $\mathcal{M}_k^t$, determined by the operational status of the sensors on the client. Due to the inherent instability of edge devices in IoT environments, some sensors on clients may fail to collect new data during certain global rounds, leading to the appearance of modality missing. For convenience, we define $\bar{\mathcal{M}}_k^t = \mathcal{M} \backslash \mathcal{M}_k^t$ as the set of missing modalities for client $k$ at global round $t$. Unlike the traditional modality missing issue in offline learning settings, where the absence of a specific modality remains consistent throughout the training process, the online setting introduces variability, with the missing modalities for a client potentially changing between global rounds. This dynamic nature adds an additional layer of complexity to the learning process and renders conventional solutions designed for static offline scenarios ineffective.

\subsubsection{\textbf{Client - Local Model Update}} Each client $k$ uses the current global model $\Theta^{t}$, provided by the server, as the initial model to train a new local model based on the current collected training dataset $\mathcal{D}_k^t$. Each client performs $E$ iterations of online gradient descent (OGD) using the full training dataset. The model update rule varies depending on whether the client has access to all modalities or is experiencing modality missing. For clarity, we define the set $\mathcal{S}_t$ as the set of clients with complete modality availability in global round $t$, while the remaining clients, denoted by $\mathcal{K}\backslash \mathcal{S}_t$, represent those with missing modalities.

\textbf{For Full Modality Client:} In this case, the data collected by client $k \in \mathcal{S}_t$ includes all modalities $\mathcal{M}$, allowing the client to utilize the complete local dataset for OGD updates. Each modality data is processed through its respective modality encoder, and the final result is produced by the head encoder. In each global round, the client performs $E$ local training iterations, following the update process described by the following equations:
 \begin{align}
        &\Theta_k^{t,0} = \Theta^{t} \notag\\
        &\Theta_k^{t, \tau + 1} = \Theta_k^{t, \tau } - \eta \textbf{G}_{k}^{t, \tau }, \quad \forall \tau = 1, ..., E \notag\\
        &\Theta_k^{t+1} = \Theta_k^{t, E} 
\end{align}
Here, $\textbf{G}_{k}^{t, \tau } = \nabla F_{t}(\Theta_k^{t, \tau }, \mathcal{D}^t_k)$  represents the gradient computed on the current local dataset containing all modalities, and $\eta$ denotes the corresponding learning rate. 

\textbf{For Missing Modality Client:} In this case, the data collected by client $k \in \mathcal{K}\backslash \mathcal{S}_t$ contains missing modalities with modalities set $\bar{\mathcal{M}}_k^t$ caused by the failure of data collection from the corresponding type of sensors. Consequently, the client can only perform local updates using the partial modality dataset $\tilde{\mathcal{D}}^t_k =  {\mathcal{D}}^t_k \backslash  \left\{ X_k^{t, m'} \right\}_{m' \in \bar{\mathcal{M}}_k^t}$. The available modalities are processed through their corresponding modality encoders, and the final output is generated by the head encoder. The client also executes $E$ local training iteration, following the update procedure detailed in the equations below:
 \begin{align}
        &\Theta_k^{t,0} = \Theta^{t} \notag\\
        &\tilde{\Theta}_k^{t, \tau + 1} = \tilde{\Theta}_k^{t, \tau } - \eta \tilde{\textbf{G}}_{k}^{t, \tau }, \quad \forall \tau = 1, ..., E \notag\\
        &\tilde{\Theta}_k^{t+1} = \tilde{\Theta}_k^{t, E} 
\end{align}
Here, we denote the gradient computed on the current local dataset with missing modalities as:
\begin{align}
\tilde{\textbf{G}}_{k}^{t, \tau } = \left [  {\left ( \tilde{\textbf{G}}_{k}^{t, \tau, 0} \right )}^{\top}, \dots  {\left ( \tilde{\textbf{G}}_{k}^{t, \tau, m}  \right )}^{\top},  \dots  {\left ( \tilde{\textbf{G}}_{k}^{t, \tau, M}  \right )}^{\top} \right ]^{\top}
 \end{align}
where
\begin{align}
\tilde{\textbf{G}}_{k}^{t, \tau, m } = \left\{\begin{matrix}
\nabla_m F_{t}(\Theta_k^{t, \tau }, \mathcal{D}^t_k) &  \textit{if} \quad m \in \mathcal{M}_k^t \\
0 & otherwise \\
\end{matrix}\right.
\end{align}

To highlight the impact of missing modality data on the model, $\tilde{\Theta}^{t+1}_k $ is used to denote local models with the impact of missing modalities. In this case, the parameters corresponding to the absent modalities remain unchanged, as they cannot be updated due to the failure in data collection and the absence of corresponding gradient updates.

\subsubsection{\textbf{Client - Local Model Upload}} Each client will upload the corresponding local model to the server after finishing the $E$ iterations of local model update. For clients $k \in \mathcal{S}_t$ with full modalities, the client will upload the local model $\Theta^{t+1}_k$ trained using data from all modalities. For clients $k \in \mathcal{K}\backslash \mathcal{S}_t$ with missing modalities, the client will upload the local model $\tilde{\Theta}^{t+1}_k$, which has been trained using available data with partial modalities. In the subsequent discussion, we also slightly abuse the notation $\tilde{\Theta}^{t+1}_k$ to also represent the local model obtained after applying a mitigation strategy for missing modalities. It is important to note that this model is inherently different from the one trained using complete modality data.

\subsubsection{\textbf{Server - Global Model Update}} The server updates the global model by using the local model updates from the clients, as given by the following equation:
\begin{align}
    \Theta^{t+1} = \frac{1}{K}\left(\sum_{k \in \mathcal{S}_t} \Theta^{t+1}_k + \sum_{k \in \mathcal{K} \backslash \mathcal{S}_t} \tilde{\Theta}^{t+1}_k\right)
\end{align}
The server then broadcasts the updated global model to all clients for the next global round. This process continues until the pre-set total number of global rounds is reached.

From the above MMO-FL workflow, it is evident that a central challenge introduced by modality missing lies in designing an effective mitigation strategy. The objective is to ensure that the model updated using reconstructed modalities closely approximates the one trained with complete modality data, i.e., $\tilde{\Theta}^{t+1}_k \rightarrow \Theta^{t+1}_k$. In the following, we first provide a detailed theoretical analysis to quantify the impact of modality missing on the regret bound of MMO-FL. Building on these insights, we then introduce the Prototypical Modality Mitigation algorithm, which is designed to effectively alleviate the adverse effects of missing modalities.

\section{Theoretical Analysis }
In this section, we provide a comprehensive regret analysis of the proposed MMO-FL algorithm. We will conduct the regret analysis in three steps. First, we will examine the regret bound for the case where the local iteration is $ E= 1$  without the impact of modality missing. Next, we will extend the analysis to the scenario where the local iteration is $ E > 1$. Finally, we will explore the regret bound when $ E > 1$ while accounting for the impact of modality missing.

\subsection{MMO-FL with local iteration $E=1$ and without modality missing}
To facilitate our analysis, we first introduce several additional definitions. After applying some basic transformations to the global model update equation above, we derive an alternative form of the global model update equation for the case where the local iteration is $E=1$ and no modality is missing, as follows:
\begin{align}
\Theta^{t+1} = \Theta^{t} - \frac{\eta }{K} \sum_{k=1}^K \textbf{G}^{t}_k
\end{align}
where $\textbf{G}^{t}_k$ represents the gradient of the local overall model for client $k$ across all $M$ modalities, given by:
\begin{align}
\textbf{G}^{t}_k &= \left [  {\left ( \textbf{G}^{t, 0}_k \right )}^{\top}, {\left ( \textbf{G}^{t, 1}_k \right )}^{\top}, \dots  {\left ( \textbf{G}^{t, m}_k  \right )}^{\top},  \dots  {\left ( \textbf{G}^{t, M}_k  \right )}^{\top} \right ]^{\top}
\end{align}

Subsequently, we will introduce the assumptions that are standard for analyzing online convex optimization, as referenced in \cite{park2022fedqogd}. Several of these assumptions are defined at the modality level to align with the specific requirements of our multimodal parameter formulation.

\begin{assumption}
For any $\mathcal{D}^t$, the loss function $F_t (\Theta; \mathcal{D}^t)$ is convex with respect to $\Theta$ and differentiable. 
\label{assm:cov}
\end{assumption}

\begin{assumption}
The loss function is $L$-Lipschitz continuous, the partial derivatives for each modality satisfies: $\left \| \nabla_{m}  F_t (\Theta)   \right \|^2 \leq L^2$. 
\label{assm:bpd}
\end{assumption}

Assumption \ref{assm:cov} ensures the convexity of the function, enabling us to leverage the properties of convex optimization. Assumption \ref{assm:bpd} constrains the magnitude of the loss function's partial derivatives at the modality level. Based on these assumptions, we can derive the following Theorem \ref{thm:general}.

\begin{theorem}\label{thm:general}
Under Assumption 1-2, MMO-FL with local iterations $E=1$ and excluding the impact of modality missing, achieves the following regret bound:
\begin{align}
 & {Reg}_T   = \sum_{t=1}^{T} \sum_{k=1}^K \mathbb{E}_t \left [ F_t (\Theta^{t}; \mathcal{D}^t_k)  \right ] - \sum_{t=1}^{T} \sum_{k=1}^K F_t (\Theta^*; \mathcal{D}^t_k) \notag \\
 & \leq \frac{K  \left \| \Theta^{1} - \Theta^* \right \|^2}{2 \eta }  + \frac{\eta T K (M+1) L^2}{2}
\end{align}
\end{theorem}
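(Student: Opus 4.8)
The plan is to run the classical online-gradient-descent regret argument on the \emph{aggregated} per-round gradient $\sum_{k=1}^K \textbf{G}_k^t$, with careful bookkeeping of the $1/K$ averaging factor and of the block structure of $\textbf{G}_k^t$. First I would invoke Assumption~\ref{assm:cov}: since each $F_t(\cdot;\mathcal{D}_k^t)$ is convex and differentiable, the first-order inequality gives $F_t(\Theta^{t};\mathcal{D}_k^t) - F_t(\Theta^*;\mathcal{D}_k^t) \le \langle \textbf{G}_k^t, \Theta^{t} - \Theta^*\rangle$ with $\textbf{G}_k^t = \nabla F_t(\Theta^{t};\mathcal{D}_k^t)$. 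Summing over $k$ collapses the inner double sum of the regret into $\sum_{t=1}^{T} \langle \sum_{k=1}^K \textbf{G}_k^t, \Theta^{t}-\Theta^*\rangle$, so it suffices to control this single inner product per round.

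Next I would exploit the compact single-iteration update $\Theta^{t+1} = \Theta^{t} - \frac{\eta}{K}\sum_{k=1}^K \textbf{G}_k^t$ together with the potential $\|\Theta^{t}-\Theta^*\|^2$. Expanding $\|\Theta^{t+1}-\Theta^*\|^2$ and solving for the cross term yields $\langle \sum_{k=1}^K \textbf{G}_k^t, \Theta^{t}-\Theta^*\rangle = \frac{K}{2\eta}\bigl(\|\Theta^{t}-\Theta^*\|^2 - \|\Theta^{t+1}-\Theta^*\|^2\bigr) + \frac{\eta}{2K}\bigl\|\sum_{k=1}^K \textbf{G}_k^t\bigr\|^2$. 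Summing over $t=1,\dots,T$ telescopes the potential differences, and discarding the nonnegative terminal term $\frac{K}{2\eta}\|\Theta^{T+1}-\Theta^*\|^2$ produces the first summand $\frac{K}{2\eta}\|\Theta^{1}-\Theta^*\|^2$ of the claimed bound.

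The remaining work, and the place where the multimodal structure actually enters, is to bound $\bigl\|\sum_{k=1}^K \textbf{G}_k^t\bigr\|^2$ by $K^2(M+1)L^2$. Here I would use that $\textbf{G}_k^t$ stacks the head block $\textbf{G}_k^{t,0}$ and the $M$ modality blocks, so $\|\textbf{G}_k^t\|^2 = \sum_{m=0}^{M}\|\textbf{G}_k^{t,m}\|^2 \le (M+1)L^2$ by applying the modality-level Lipschitz bound of Assumption~\ref{assm:bpd} to each of the $M+1$ blocks. Then Cauchy--Schwarz (equivalently convexity of $\|\cdot\|^2$) gives $\bigl\|\sum_{k=1}^K \textbf{G}_k^t\bigr\|^2 \le K\sum_{k=1}^K \|\textbf{G}_k^t\|^2 \le K^2(M+1)L^2$, and feeding this into $\frac{\eta}{2K}\sum_{t=1}^{T}\bigl\|\sum_{k=1}^K \textbf{G}_k^t\bigr\|^2$ yields exactly $\frac{\eta T K (M+1) L^2}{2}$. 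The main obstacle I anticipate is purely this constant bookkeeping: keeping the $1/K$ from the averaged update, the extra factor $K$ from the client-sum Cauchy--Schwarz step, and the $(M+1)$ from the head-plus-modalities decomposition all aligned so that the final constants match. The expectation $\mathbb{E}_t$ poses no additional difficulty, since every inequality above holds pointwise in the collected data and is therefore preserved under taking expectations.
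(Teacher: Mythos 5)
Your proposal is correct and follows what is essentially the paper's own argument: the classical online-gradient-descent (Zinkevich-style) analysis, using convexity to reduce regret to the inner products $\langle \sum_k \textbf{G}_k^t, \Theta^t - \Theta^*\rangle$, the averaged update rule to telescope $\|\Theta^t-\Theta^*\|^2$, and the block decomposition of the gradient into the head plus $M$ modality blocks (each bounded by $L^2$ via Assumption 2) together with Cauchy--Schwarz over clients to obtain the $\frac{\eta T K (M+1)L^2}{2}$ term. All constants, including the interplay of the $1/K$ averaging factor, the factor $K$ from the client sum, and the $(M+1)$ block count, are tracked correctly and reproduce the stated bound exactly.
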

\begin{proof}
The proof can be found in Appendix A.
\end{proof}

According to Theorem \ref{thm:general}, by setting $\eta = \mathcal{O}(1/\sqrt{T})$, the MMO-FL can achieve a sublinear regret bound over $T$ time rounds, specifically $\mathcal{O}(\sqrt{T})$. A sublinear regret bound indicates that the average regret per round, defined as the regret divided by the number of rounds, approaches zero as the number of rounds increases indefinitely. This suggests that the algorithm progressively refines its performance by learning from its errors. 

\subsection{MMO-FL with local iterations $E>1$ and without modality missing}
In this section, we extend the proof to the case where the local iteration $E>1$ and no modality is missing. Following a similar transformations approach, we derive the global model update equation as follows:
\begin{align}
\Theta^{t+ 1, 0} = \Theta^{t, 0} - \frac{\eta}{K} \sum_{k=1}^K \sum_{\tau = 0}^{E -1} {\textbf{G}}^{t, \tau}_k
\end{align}
where $\textbf{G}^{t, \tau}_k$ denotes the gradient of the local overall model for client $k$ across all $M$ modalities for round $t$ and local iteration $\tau$. Due to the presence of multiple local iterations $E > 1$, we introduce two additional assumptions to support the regret analysis of this case.

\begin{assumption}
The partial derivatives, corresponding to the consistent loss function, fulfills the following condition:
\begin{align}
\left \|  {\textbf{G}}_{k}^{t, \tau'} - {\textbf{G}}^{t, \tau}_{k} \right \| \leq \varphi \left \| \Theta_k^{t, \tau'} - \Theta_k^{t, \tau} \right \| \notag
\end{align}
\label{assm:gradient-change}
where $\tau'$ and $\tau$ indicate that they correspond to different local iterations.
\end{assumption}

For the purposes of subsequent theoretical analysis, we consider a $D$-dimensional vector for each modality in both the overall gradient and model. We define an arbitrary vector element $d \in [1, D]$ in overall gradient for modality $m$ as ${\textbf{G}}_{k, d}^m$, and similarly, the arbitrary vector element $d \in [1, D]$ in overall model for modality $m$ is denoted as $\Theta_{k, d}^m$. Including the head encoder $m = 0$, each overall gradient and model consists of a total of $(M + 1)D$ vector elements.

\begin{assumption}
The arbitrary vector element $d$ in the overall model $\Theta_{k, d}^m$ for any modality $m $ is bounded by: $\left | \Theta_{k, d}^m \right | \leq \sigma  $.
\label{assm:model-variant}
\end{assumption}

Assumption \ref{assm:gradient-change} ensures that the variation in the partial derivatives is confined within a specific range, which aligns with the model variation over two different local iterations that maintain a consistent loss function. This approach effectively utilizes the concept of smoothness. Lastly, Assumption \ref{assm:model-variant} specifies the permissible range for any vector element in the overall model. Then we can obtain the following Theorem \ref{thm:general2}.

\begin{theorem}\label{thm:general2}
Under Assumption 1-4, MMO-FL with local iterations $E>1$ and excluding the impact of modality missing, achieves the following regret bound:
\begin{align}
 & {Reg}_T   = \sum_{t=1}^{T} \sum_{k=1}^K \mathbb{E}_t \left [ F_t (\Theta^{t,0}; \mathcal{D}^t_k)  \right ] - \sum_{t=1}^{T} \sum_{k=1}^K F_t (\Theta^*; \mathcal{D}^t_k) \notag \\
 & \leq \frac{K  \left \| \Theta^{1, 0} - \Theta^* \right \|^2}{2 \eta E}  + \frac{\eta T K E (M+1) L^2}{2} \notag  \\
 & + 2\eta T D E K (M+1)^2 \varphi \sigma L 
\end{align}
\end{theorem}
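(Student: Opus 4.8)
The plan is to run the standard online-gradient-descent potential argument on the aggregated global iterate, exactly as in the proof of Theorem~\ref{thm:general}, and to isolate the extra price paid for taking $E>1$ local steps, i.e. the client-drift between the global model $\Theta^{t,0}$ and the local iterates $\Theta_k^{t,\tau}$ at which the gradients are actually evaluated. Concretely, I would start from the one-round potential $\|\Theta^{t+1,0}-\Theta^*\|^2$, substitute the aggregated update $\Theta^{t+1,0}=\Theta^{t,0}-\tfrac{\eta}{K}\sum_{k}\sum_{\tau=0}^{E-1}\textbf{G}_k^{t,\tau}$, and expand the square. This produces the usual three pieces: the telescoping difference $\|\Theta^{t,0}-\Theta^*\|^2-\|\Theta^{t+1,0}-\Theta^*\|^2$, a quadratic term $\tfrac{\eta^2}{K^2}\|\sum_{k,\tau}\textbf{G}_k^{t,\tau}\|^2$, and the cross term $\tfrac{2\eta}{K}\sum_{k,\tau}\langle\Theta^{t,0}-\Theta^*,\textbf{G}_k^{t,\tau}\rangle$ that must be turned into per-round regret (in expectation, to match the $\mathbb{E}_t$ in the statement).

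The key step is the decomposition of the cross term. Since $\Theta_k^{t,0}=\Theta^{t,0}$, I would write $\textbf{G}_k^{t,\tau}=\textbf{G}_k^{t,0}+(\textbf{G}_k^{t,\tau}-\textbf{G}_k^{t,0})$. Summing the first part over $\tau$ yields $E\,\langle\Theta^{t,0}-\Theta^*,\textbf{G}_k^{t,0}\rangle$, which by convexity (Assumption~\ref{assm:cov}) lower-bounds $E\,[F_t(\Theta^{t,0};\mathcal{D}_k^t)-F_t(\Theta^*;\mathcal{D}_k^t)]$ — this is what produces the factor $E$ multiplying the regret and, after dividing through, the $\tfrac1E$ in the leading term. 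The second part is the drift term $\sum_\tau\langle\Theta^{t,0}-\Theta^*,\textbf{G}_k^{t,\tau}-\textbf{G}_k^{t,0}\rangle$, which is exactly the obstacle introduced by $E>1$ and which is absent in Theorem~\ref{thm:general}.

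For the two error terms I would proceed as follows. The quadratic term is handled as in Theorem~\ref{thm:general}: a Cauchy--Schwarz (power-mean) step gives $\|\sum_{k,\tau}\textbf{G}_k^{t,\tau}\|^2\le KE\sum_{k,\tau}\|\textbf{G}_k^{t,\tau}\|^2$, and Assumption~\ref{assm:bpd} gives $\|\textbf{G}_k^{t,\tau}\|^2\le (M+1)L^2$ by summing the $M+1$ modality blocks, producing the $\tfrac{\eta TKE(M+1)L^2}{2}$ term after summation. For the drift term I would first control the model drift from the local recursion $\Theta_k^{t,\tau}-\Theta_k^{t,0}=-\eta\sum_{s<\tau}\textbf{G}_k^{t,s}$, so that $\|\Theta_k^{t,\tau}-\Theta_k^{t,0}\|\le\eta\tau(M+1)L\le\eta E(M+1)L$ by Assumption~\ref{assm:bpd}. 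Assumption~\ref{assm:gradient-change} then converts this into a bound on the gradient drift $\|\textbf{G}_k^{t,\tau}-\textbf{G}_k^{t,0}\|\le\varphi\eta E(M+1)L$, hence on each of its $(M+1)D$ coordinates. Finally, bounding the inner product coordinate-wise and using Assumption~\ref{assm:model-variant} (so each coordinate of $\Theta^{t,0}-\Theta^*$ is at most $2\sigma$ in magnitude) gives $|\langle\Theta^{t,0}-\Theta^*,\textbf{G}_k^{t,\tau}-\textbf{G}_k^{t,0}\rangle|\le 2\sigma\varphi\eta E(M+1)^2 D L$; summing over $\tau$ (a factor $E$) is what generates the new third term $2\eta TDEK(M+1)^2\varphi\sigma L$ after the $\tfrac1E$ normalization.

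To finish, I would telescope the potential over $t=1,\dots,T$ (dropping the nonnegative final term $\|\Theta^{T+1,0}-\Theta^*\|^2$), divide through by $2\eta E/K$, and collect the three contributions into the stated bound. I expect the drift term to be the main obstacle: it is the only place where the multi-step structure genuinely enters, and keeping it within the stated modality-level assumptions forces the coordinate-wise bounding that inflates the dependence to $(M+1)^2 D$. The one point to watch is consistency of the norm estimates — the quadratic term uses the tight $\|\textbf{G}\|^2\le(M+1)L^2$, whereas the coordinate-wise drift bound naturally uses the looser $\|\textbf{G}\|\le(M+1)L$ — but both are valid upper bounds, so assembling them yields the claimed inequality, which remains $\mathcal{O}(\sqrt{T})$ under $\eta=\mathcal{O}(1/\sqrt T)$.
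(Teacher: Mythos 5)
Your proposal is correct and follows essentially the same route as the paper's argument: the one-round potential expansion of $\|\Theta^{t+1,0}-\Theta^*\|^2$, convexity applied at the global iterate $\Theta^{t,0}$ (producing the factor $E$ and hence the $1/E$ in the leading term), the power-mean bound with Assumption~2 for the quadratic term, and the drift controlled through the local recursion, Assumption~3, and the coordinate-wise use of Assumption~4 — which is exactly what generates the $(M+1)^2 D$ factor in the third term rather than the tighter $(M+1)^{3/2}\sqrt{D}$ that plain Cauchy--Schwarz would give. All three terms of the stated bound are recovered with the paper's exact constants, so the proof is complete as outlined.
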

\begin{proof}
The proof can be found in Appendix C.
\end{proof}
According to Theorem \ref{thm:general2}, by setting $\eta = \mathcal{O}(1/\sqrt{T})$, the MMO-FL can also achieve a sublinear regret rate $\mathcal{O}(\sqrt{T})$. However, in contrast to Theorem \ref{thm:general}, the regret bound includes an additional term resulting from the presence of multiple local iterations. In the following analysis, we will consider the effect of modality missing.
\subsection{MMO-FL with local iterations $E>1$ and modality missing}
In this section, we extend the proof to the more complex and common scenario where the local iteration $E>1$ and modality missing is present. We derive the global model update equation as follows:
\begin{align}
\Theta^{t+ 1, 0} = \Theta^{t, 0} - \frac{\eta}{K} \sum_{k=1}^K \sum_{\tau = 0}^{E -1} \tilde{\textbf{G}}^{t, \tau}_k
\end{align}
where $\tilde{\textbf{G}}^{t, \tau}_k$ represents the gradient of the local overall model for client $k$ with partial modalities $\mathcal{M}_k^t$ at round $t$ and local iteration $\tau$. Obviously, $\mathcal{M} \neq \mathcal{M}_k^t$ holds for most of the time.

Since each client may have a different modality state in each global round, we introduce an additional assumption to constrain the maximum upper bound of modality missing. suppose we define $\beta_k^t = \frac{| \mathcal{M}_k^t|}{M+1}$ as the proportion of available modalities for client $k$ at round $t$ relative to the total number of modalities.

\begin{assumption}
For any client $k$ at any global round $t$, the minimum number of available modalities is bounded as follows: $\beta_k^t \geq \beta$, where $\beta$ is a constant between $[0, 1]$.
\label{assm:model-missing}
\end{assumption}

Assumption \ref{assm:model-missing} guarantees the most extreme case of modality missing, ensuring that no client will experience a complete lack of modal data at any global round, which would otherwise prevent training. With the above assumptions in place, we can derive the following Theorem \ref{thm:general3}.

\begin{theorem}\label{thm:general3}
Under Assumption 1-5, MMO-FL with local iterations $E>1$ and including the impact of modality missing, achieves the following regret bound:
\begin{align}
 & {Reg}_T   = \sum_{t=1}^{T} \sum_{k=1}^K \mathbb{E}_t \left [ F_t (\Theta^{t,0}; \mathcal{D}^t_k)  \right ] - \sum_{t=1}^{T} \sum_{k=1}^K F_t (\Theta^*; \mathcal{D}^t_k) \notag \\
 & \leq \frac{K  \left \| \Theta^{1, 0} - \Theta^* \right \|^2}{2 \eta E} + (5 -2 \beta) \eta T K  (M+1) L^2  \notag  \\
 & + 2\eta T D K  (M+1)^2 \varphi \sigma L + \frac{2 (1 - \beta) (M+1) T D K  \sigma L }{E}
\end{align}
\end{theorem}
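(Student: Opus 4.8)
The plan is to mirror the structure of the $E>1$ analysis that establishes Theorem~\ref{thm:general2} (Appendix~C) and to isolate the additional error contributed by the zeroed-out gradient coordinates. I would begin from the one-step descent identity for the global iterate. Writing the effective update direction as $\bar{\textbf{G}}^{t}=\frac{1}{K}\sum_{k=1}^{K}\sum_{\tau=0}^{E-1}\tilde{\textbf{G}}^{t,\tau}_k$, so that $\Theta^{t+1,0}=\Theta^{t,0}-\eta\bar{\textbf{G}}^{t}$, expanding $\|\Theta^{t+1,0}-\Theta^*\|^2$ and rearranging gives
\begin{align}
\sum_{k=1}^{K}\sum_{\tau=0}^{E-1}\langle \tilde{\textbf{G}}^{t,\tau}_k,\Theta^{t,0}-\Theta^*\rangle = \frac{K\left(\|\Theta^{t,0}-\Theta^*\|^2-\|\Theta^{t+1,0}-\Theta^*\|^2\right)}{2\eta}+\frac{\eta K}{2}\|\bar{\textbf{G}}^{t}\|^2 . \notag
\end{align}
Summing the first group over $t$ telescopes, and after the later division by $E$ produces the leading term $\frac{K\|\Theta^{1,0}-\Theta^*\|^2}{2\eta E}$ of the bound.

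The core step is to convert the left-hand inner product into the per-round regret. I would introduce the decomposition
\begin{align}
\tilde{\textbf{G}}^{t,\tau}_k = \textbf{G}^{t,0}_k + \left(\textbf{G}^{t,\tau}_k-\textbf{G}^{t,0}_k\right) + \left(\tilde{\textbf{G}}^{t,\tau}_k-\textbf{G}^{t,\tau}_k\right), \notag
\end{align}
so that the anchor $\textbf{G}^{t,0}_k=\nabla F_t(\Theta^{t,0};\mathcal{D}^t_k)$ feeds directly into convexity (Assumption~\ref{assm:cov}), giving $\langle \textbf{G}^{t,0}_k,\Theta^{t,0}-\Theta^*\rangle \ge F_t(\Theta^{t,0};\mathcal{D}^t_k)-F_t(\Theta^*;\mathcal{D}^t_k)$, while the remaining two brackets are treated as errors. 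Summing the anchor contribution over $\tau$ contributes a factor $E$, which is exactly what is cancelled when the identity is divided by $E$ and summed over $t$, recovering $\text{Reg}_T$ on the right-hand side.

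It remains to bound the three residual pieces. The gradient-norm term $\frac{\eta K}{2}\|\bar{\textbf{G}}^{t}\|^2$ is controlled using $\|\tilde{\textbf{G}}^{t,\tau}_k\|^2\le\beta_k^t(M+1)L^2\le (M+1)L^2$ from Assumption~\ref{assm:bpd} together with Jensen's inequality across the $KE$ summands; combined with the $\eta(M+1)L^2$ contributions produced when Young's inequality is used to peel the error brackets off the anchor, these collect into the $(5-2\beta)\eta TK(M+1)L^2$ term, whose $\beta$-dependence traces back to the $\beta_k^t$ factor in the norm of the truncated gradient. The local-drift bracket $\textbf{G}^{t,\tau}_k-\textbf{G}^{t,0}_k$ is handled as in Appendix~C: Assumption~\ref{assm:gradient-change} gives $\|\textbf{G}^{t,\tau}_k-\textbf{G}^{t,0}_k\|\le\varphi\|\Theta^{t,\tau}_k-\Theta^{t,0}_k\|$, and the coordinatewise bound $|\Theta_{k,d}^m|\le\sigma$ of Assumption~\ref{assm:model-variant} over the $(M+1)D$ entries yields the $2\eta TDK(M+1)^2\varphi\sigma L$ term. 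The genuinely new piece is the missing-modality bracket $\tilde{\textbf{G}}^{t,\tau}_k-\textbf{G}^{t,\tau}_k$, which is supported only on the coordinates of $\bar{\mathcal{M}}_k^t$: each of its at most $(1-\beta)(M+1)D$ nonzero entries has magnitude bounded by $L$ while the paired entry of $\Theta^{t,0}-\Theta^*$ is bounded by $2\sigma$, so Assumption~\ref{assm:model-missing} caps this inner product by $2(1-\beta)(M+1)D\sigma L$ per $(k,\tau)$; after summation and the $1/E$ normalization this becomes the final $\frac{2(1-\beta)(M+1)TDK\sigma L}{E}$ term, which correctly vanishes as $\beta\to1$.

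The main obstacle I anticipate is the careful accounting in this last bracket: one must bound $\langle \tilde{\textbf{G}}^{t,\tau}_k-\textbf{G}^{t,\tau}_k,\Theta^{t,0}-\Theta^*\rangle$ using only coordinatewise control from Assumption~\ref{assm:model-variant}, since the displacement $\Theta^{t,0}-\Theta^*$ is not itself a gradient and therefore cannot be shrunk by a smoothness argument. Equally delicate is keeping the powers of $E$ consistent across the telescoping sum, the anchor factor $E$, and the $1/E$ normalization, so that the missing-modality penalty emerges with the claimed $1/E$ scaling and the whole bound collapses to Theorem~\ref{thm:general2} when $\beta=1$. Carrying the expectation $\mathbb{E}_t$ over the random per-round data through Assumption~\ref{assm:model-missing}, which supplies a uniform worst case $\beta_k^t\ge\beta$, is then routine.
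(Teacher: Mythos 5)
Your skeleton --- the one-step descent identity for $\Theta^{t+1,0}=\Theta^{t,0}-\eta\bar{\textbf{G}}^t$, telescoping, the three-way decomposition $\tilde{\textbf{G}}^{t,\tau}_k = \textbf{G}^{t,0}_k + (\textbf{G}^{t,\tau}_k-\textbf{G}^{t,0}_k) + (\tilde{\textbf{G}}^{t,\tau}_k-\textbf{G}^{t,\tau}_k)$, convexity on the anchor, Assumption~\ref{assm:gradient-change} for the local drift, and coordinatewise bounds via Assumptions~\ref{assm:bpd}, \ref{assm:model-variant}, \ref{assm:model-missing} for the masked blocks --- is the natural extension of the $E>1$ analysis and is surely the intended architecture. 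The gap is in the arithmetic of exactly the piece you call ``genuinely new,'' and it is not a bookkeeping detail: your own numbers do not produce the theorem's fourth term. You bound $\langle \tilde{\textbf{G}}^{t,\tau}_k-\textbf{G}^{t,\tau}_k,\,\Theta^{t,0}-\Theta^*\rangle$ by $2(1-\beta)(M+1)D\sigma L$ \emph{per local iteration} $(k,\tau)$. Summing over $\tau=0,\dots,E-1$ multiplies this by $E$, and the subsequent $1/E$ normalization exactly cancels that factor, leaving $2(1-\beta)(M+1)TDK\sigma L$ --- which is $E$ times larger than the claimed $\frac{2(1-\beta)(M+1)TDK\sigma L}{E}$. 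To obtain the $1/E$ in the statement, the missing-modality penalty must be charged once per client per \emph{round}, not once per iteration, and the mechanism for that is precisely what a per-iteration gradient bound throws away: on a missing block the masked trajectory accumulates \emph{zero} update over the whole round, so the cumulative deficit $\eta\sum_{\tau=0}^{E-1}\bigl(\textbf{G}^{t,\tau,m'}_k-\tilde{\textbf{G}}^{t,\tau,m'}_k\bigr)$ is a single model displacement whose coordinates Assumption~\ref{assm:model-variant} controls by $2\sigma$ \emph{independently of} $E$; only a round-aggregated argument of this type leaves a residual $1/E$ after normalization. As written, your proof establishes a strictly weaker bound than the one stated.

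A second mismatch points the same way. Your account of the $(5-2\beta)\eta TK(M+1)L^2$ term cannot be right as described: Jensen across the $KE$ summands of $\|\bar{\textbf{G}}^t\|^2$ gives $\frac{\eta K}{2}E^2(M+1)L^2$ per round, hence $\frac{\eta TKE(M+1)L^2}{2}$ after the $1/E$ normalization --- a term carrying a factor of $E$, exactly as in Theorem~\ref{thm:general2} --- whereas the stated second term of Theorem~\ref{thm:general3} has no $E$, and you never actually derive where the constants $5$ and $-2\beta$ come from. Relatedly, the sanity check you rely on (``the bound collapses to Theorem~\ref{thm:general2} when $\beta=1$'') fails on the face of the two statements: at $\beta=1$ the middle terms of Theorem~\ref{thm:general3} read $3\eta TK(M+1)L^2+2\eta TDK(M+1)^2\varphi\sigma L$, while Theorem~\ref{thm:general2} has $\frac{\eta TKE(M+1)L^2}{2}+2\eta TDEK(M+1)^2\varphi\sigma L$; these differ by factors of $E$. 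So you cannot simultaneously reuse the Appendix~C estimates verbatim (which produce the $E$-laden terms) and land on the stated $E$-free terms; reconciling this $E$-accounting is the real content of the proof of Theorem~\ref{thm:general3}, and the proposal does not do it.
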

\begin{proof}
The proof can be found in Appendix E.
\end{proof}

Based on Theorem 3, by choosing the learning rate as $\eta = \mathcal{O}(1/\sqrt{T})$, MMO-FL with missing modality can achieve a regret bound of $\mathcal{O}(\sqrt{T} + T (1 - \beta))$ over $T$ time rounds. Our analysis reveals that the term $\mathcal{O}(T (1 - \beta))$, which accounts for the missing modality, plays a crucial role in determining whether a sublinear regret bound can be achieved. Notably, if we set $\beta = 1$, meaning no modality is missing for any client in any round, MMO-FL clearly achieves a regret bound of $\mathcal{O}(\sqrt{T})$. However, in practical IoT environments where device stability cannot always be guaranteed, this term cannot be entirely eliminated. This occurs because missing data for a specific modality prevents the corresponding partial gradient from being updated. Consequently, the most effective way to mitigate the impact of missing modalities is to approximate the absent modal information. In the following section, we introduce the Prototypical Modality Mitigation algorithm, which is specifically designed to address this problem.

\section{Prototypical Modality Mitigation Algorithm}
In this section, we present the details of the Prototypical Modality Mitigation algorithm (PMM). As discussed in the previous section, the regret bound of MMO-FL with missing modalities is influenced by the modality missing ratio. However, since the occurrence of missing modalities is often unpredictable and uncontrollable in real-world IoT settings, it becomes essential to mitigate their impact. A natural approach is to reconstruct or approximate the missing modalities during training process. Drawing inspiration from prototype learning, we identify prototypes as an effective solution to address this challenge, as they encapsulate the underlying semantics of each class and can serve as reliable substitutes for absent modality data.


\begin{figure*}[htbp]
    \centering
    \includegraphics[width=0.8\textwidth]{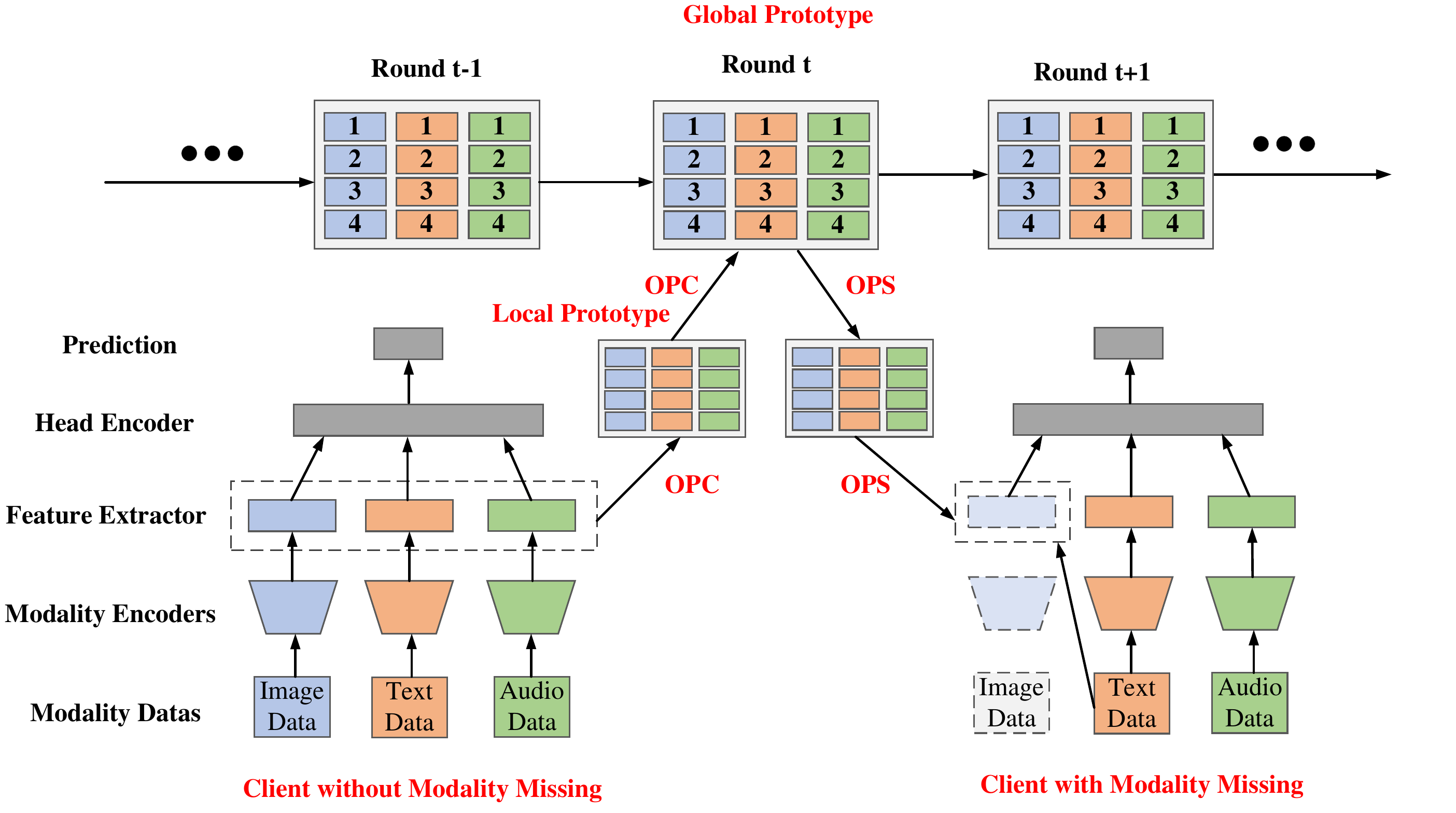} 
    \caption{Illustration of PMM: Online Prototypes Construction (OPC) involves generating prototypes from continuously evolving data throughout the MMO-FL learning process, ensuring a dynamic representation of each modality. Online Prototypes Substitution (OPS), on the other hand, leverages these prototypes to compensate for missing modalities in real-time, enabling effective modality reconstruction for clients encountering incomplete modality data. }
    \vspace{-10pt}
    \label{pmm}
\end{figure*}

The proposed PMM algorithm is detailed in Algorithm~\ref{alg: pmm} and comprises two key components: Online Prototypes Construction (OPC), which builds prototypes based on dynamic data during the MMO-FL learning process, and Online Prototypes Substitution (OPS), which utilizes these prototypes for real-time modality mitigation in clients experiencing missing modalities. The detailed flowchart of the PPM algorithm is illustrated in Fig.~\ref{pmm}. The subsequent sections provide a comprehensive explanation of each component.

\begin{algorithm}
    \caption{Prototypical Modality Mitigation Algorithm} \label{alg: pmm} 
    \begin{algorithmic}[1]
        \For {$t = 1, 2, ..., T - 1$}
            \State \textbf{Client Side:}
            \For {$k = 1, 2, ..., K$}
                \If { Client with full modalities ($k \in \mathcal{S}_t$)}
                    \State Calculates the local prototype $p_{k, c}^{t, m}$  via Eq.~\ref{local_pro}.
                    \State Integrates and uploads the $p_{k, c}^{t, m}$ to server.
                \Else
                    \State Downloads and utilizes the collection $\bar{\mathcal{P}}^t$.
                    \State Builds the representations $\tilde{Z}_k^{t, m}$ via Eq.~\ref{vir_fe}.
                    \State Utilizes the $\tilde{Z}_k^{t, m}$ for class prediction.
                \EndIf
            \EndFor
            \State \textbf{Server Side:}
            \State Collects the local prototype $p_{k, c}^{t, m}$  from clients.
            \State Calculates the $p_{c}^{t, m}$ via Eq.~\ref{tgp}.
            \State Updates the $\bar{p}_{c}^{t, m}$ via Eq.~\ref{pgp}.
            \State Broadcasts the $\bar{\mathcal{P}}^t$ to all clients.
        \EndFor
    \end{algorithmic}
\end{algorithm}

\subsection{Online Prototypes Construction}
The OPC process runs concurrently within each global round. We define $c \in  \left\{1, \dots, C \right\}$ as the label classes. Accordingly, $X_{k, c}^{t, m}$ represents the set of training sample of class $c$ and modality $m$, collected by client $k$ at global round $t$. The local prototype is then defined as the average value of the features extracted by the modality encoder:
\begin{align}
p_{k, c}^{t, m} = \frac{1}{ | X_{k, c}^{t, m} |} \sum_{n \in \mathcal{N}( y_{k, n}^t = c)}  \theta^m(x_{k, n}^{t, m} ) \label{local_pro}
\end{align}
To ensure fairness among prototypes for different modalities, we introduce two additional rules: (1) Local prototypes across different modalities should follow a uniform structure, ensuring that the extracted features $\theta^m(x_{k, n}^{t, m} )$ from modality encoders remain consistent, despite differences in the original data structures of each modality. By processing data through these encoders, the extracted features from various modalities can be structurally aligned. (2) Local prototypes across modalities should be normalized to maintain a consistent magnitude, ensuring fair comparability and stability throughout the learning process.

After each training round, each client generates local prototypes for each modality and transmits them to the server. After receiving the local prototypes for each modality, the server creates a temporal global prototype for modality $m$ and class $c$ at global round $t$ as follows:
\begin{align}
p_{c}^{t, m} = \frac{1}{|\mathcal{S}_t|} \sum_{k \in \mathcal{S}_t}  p_{k, c}^{t, m} \label{tgp}
\end{align}
Given the nature of online learning, data bias may arise, and the collected data may not encompass all classes in every global round. To address this issue, it is essential to maintain a persistent global prototype that accurately captures the semantic representation of each class for every modality across rounds. We define the persistent global prototype for modality $m$ and class $c$ at global round $t$ as follows:
\begin{align}
\bar{p}_{c}^{t, m} = \frac{(t-1)\bar{p}_{c}^{t-1, m} + {p}_{c}^{t, m}}{t} \label{pgp} 
\end{align}
The persistent global prototype $\bar{p}_{c}^{t, m}$ is continuously updated and stored on the server, ensuring its availability for supporting the OPS process.

\subsection{Online Prototypes Substitution}
In this phase, the client performs prototype substitution to address missing modalities problem. After updating the persistent global prototype, the server organizes them by modality to create the persistent global prototype collection, denoted as $\bar{\mathcal{P}}^t$, defined as follows: 
\begin{align}
\bar{\mathcal{P}}^t =  
\begin{bmatrix}
\bar{p}_{1}^{t, 1} & \dots & \bar{p}_{c}^{t, 1} & \dots & \bar{p}_{C}^{t, 1} \\
\dots & \dots & \dots & \dots & \dots  \\
\bar{p}_{1}^{t, m} & \dots & \bar{p}_{c}^{t, m} & \dots & \bar{p}_{C}^{t, m} \\
\dots & \dots & \dots & \dots & \dots  \\
\bar{p}_{1}^{t, M} & \dots & \bar{p}_{c}^{t, M} & \dots & \bar{p}_{C}^{t, M} \\
\end{bmatrix}
\end{align}
Next, the server distributes this global prototype collection to all clients. When a client detects that a certain modality is missing due to uncollected data, it generates the corresponding virtual feature representations for the missing modality. This reconstruction is performed by leveraging the received global prototypes in conjunction with the class distribution inferred from the available modality data, as formulated in the following equation:
\begin{align}
\tilde{Z}_k^{t, m} = \left [ \bar{p}_{ c(k, 1)}^{t, m}, \dots \bar{p}_{ c(k, n)}^{t, m}, \dots \bar{p}_{ c(k, N)}^{t, m}\right ]   \label{vir_fe}
\end{align}
Here, $\bar{p}_{c(k, n)}^{t, m}$ denotes the persistent global prototype of modality $m$, corresponding to the class label of the $n$-th data sample from client $k$ at global round $t$. When modality $m$ is absent in the current global round, the corresponding feature representations are synthesized using the persistent global prototypes. Subsequently, the predicted labels are obtained through the head encoder as $\theta^0 \left (Z_k^{t, 1}, \dots, \tilde{Z}_k^{t, m}, \dots Z_k^{t, M}  \right )$.
This is then followed by the standard execution of the MMO-FL process, as introduced in the previous section. To account for potential overhead and additional considerations introduced by the implementation of the PMM algorithm, we present the following three additional remarks to further refine our proposed algorithm. 

\textbf{Remark 1 (Prototype Approximation):} From the perspective of the law of large numbers, a feature extractor based on persistent global prototypes can effectively represent the feature extractor for missing modalities. As the number of prototypes increases, the representation of each modality becomes more accurate. The law of large numbers suggests that as the sample size grows, the sample mean approaches the true mean of the population. Similarly, in feature extraction, using a larger number of prototypes captures the diverse characteristics of the data, reducing potential bias or distortion that could occur with a limited set of prototypes. By integrating more prototypes, the system can better approximate the feature distribution of the missing modality, compensating for missing information by leveraging the overall trends and patterns derived from the larger dataset.

\textbf{Remark 2 (Quantized Upload):} Since the persistent global prototype in the OPS phase serves as an approximate compensation mechanism, the precision of this substitution may not be highly sensitive and precise. Therefore, to further reduce communication overhead, quantization technique can be applied when transmitting the persistent global prototype collection. In our experiments, we will also investigate the impact of this strategy on overall learning performance.

\textbf{Remark 3 (Delayed Update):} Given the robustness of online learning and the approximate nature of prototype substitution, the OPC process does not need to be executed in every global round. This design choice helps reduce both computational cost and communication overhead. Accordingly, the persistent global prototype collection used in the OPS phase can be a previously stored version rather than one updated in the current round. In our experiments, we further investigate the impact of this strategy on overall learning performance.

\section{Experiment }
In this section, we will experimentally evaluate the performance of the MMO-FL algorithm. The experiments were conducted on an Ubuntu 18.04 machine equipped with an Intel Core i7-10700KF 3.8GHz CPU and a GeForce RTX 3070 GPU. The detailed experimental settings are provided below. 

\subsection{Datasets}
To simulate MMO-FL in IoT scenarios, we will use two real-world multi-modal datasets, UCI-HAR and MVSA-Single. Detailed descriptions of both datasets are provided below.

\textbf{UCI-HAR}: The UCI-HAR dataset is a widely recognized resource for human activity recognition research. It includes 10299 data samples collected from 30 participants (average age: 24) engaging in six activities (six classes): walking, walking upstairs, walking downstairs, sitting, standing, and lying down. These activities were recorded using smartphone sensors, specifically accelerometers and gyroscopes, which capture three-dimensional motion data. The sensors sampled data at 50 Hz, producing 128 readings per sensor axis within each time window. This dataset is utilized in our experiments to analyze sensor-based human activity recognition using three-dimensional motion data. 

\textbf{MVSA-Single}: The MVSA-Single dataset is tailored for multimodal sentiment analysis research, focusing on the integration of textual and visual cues from social media. It comprises 5,129 image-text pairs, where each sample consists of a single image accompanied by corresponding textual content. Each pair is annotated with one of three sentiment labels: Positive, Neutral, or Negative, reflecting the emotional tone jointly conveyed by the image and text. 

Both of the original datasets are static and designed for offline learning. To align with the requirements of online learning, they must be transformed into dynamic datasets. The transformation process is described in detail in the following.

\subsection{Online Data Generation}
In the experiment, operating within an online learning scenario requires the training dataset to be dynamic, with data collected at the start of each global round.  To ensure sufficient data samples for good training performance, we collect the initial dataset at the beginning of the training process. Considering the differences in dataset types and sizes, distinct online data generation details are used for the UCI-HAR and MVSA-Single datasets.

\textbf{UCI-HAR}: For the UCI-HAR dataset, training involves a total of five clients. Initially, each client is assigned 2000 data samples drawn according to a Dirichlet distribution with a non-IID degree $\alpha$, representing the client's long-term data source. In subsequent global rounds, each client maintains an online local dataset of 500 samples. At every round, 20 new samples are drawn from the long-term data source and appended to the local dataset, while the oldest 20 samples are simultaneously removed. This real-time update mechanism ensures that the local datasets evolve dynamically, satisfying the requirements of online learning.

\textbf{MVSA-Single}: For the MVSA-Single dataset, the training process is distributed across five clients. Initially, each client is allocated 1,500 data samples drawn from a Dirichlet distribution with a non-IID parameter $\alpha$, simulating the client’s long-term data source. In subsequent global rounds, each client maintains an online local dataset consisting of 800 samples. At every round, 20 new samples are drawn from the long-term data source and added to the local dataset, while the oldest 20 samples are removed. This streaming update mechanism ensures that each client’s dataset evolves continuously over time, thereby aligning with the requirements of online learning.

\textbf{Modality Missing Simulation}: To ensure consistency and clearly isolate the impact of missing modalities, we simulate missing modalities by assuming that all clients simultaneously experience the absence of a specific modality within the same global round. This design choice avoids the confounding effects that would arise if different clients experienced different missing modalities across rounds, which would otherwise complicate the evaluation of modality-level influence on learning performance. Given that all datasets used in this study contain only two modalities, we simulate the missing of one modality at a time. Additionally, we define $\lambda$ as the ratio of global rounds in which modality missing occurs to the total number of rounds. In subsequent experiments, we control the value of $\lambda$ to systematically investigate the effect of the frequency of modality missing on learning performance.

\subsection{Model Details}
In the following, we detail the model architectures and key parameters used in our experiments, presented separately for the two datasets.

\textbf{UCI-HAR}: The dataset includes two distinct modalities: accelerometer and gyroscope signals, necessitating the use of modality-specific encoder models. For the accelerometer data, we use a CNN-based model as the encoder. This model consists of five convolutional layers and one fully connected layer. For the gyroscope data, we use an LSTM-based model with one LSTM layer and one fully connected layer. Both encoders generate a 128-dimensional feature representation. The shared header model is implemented using two fully connected layers. The learning rate is 0.1, with a decay factor of 0.95 until it reaches 0.001. 

\textbf{MVSA-Single}: The dataset includes two distinct modalities: text and
image data, also need modality-specific encoder models. For the image data, a four-layer CNN is utilized, with a modified output layer designed to produce a 128-dimensional feature vector. For text data, a two-layer LSTM network is employed, also generating a 128-dimensional output. The modality-specific features are subsequently processed by a shared header model consisting of two fully connected layers. The learning rate is 0.01, with a decay factor of 0.99 until it reaches 0.001.

\subsection{Benchmarks} In our experiments, we employ several benchmarks for performance comparison. Given that the problem of missing modalities in MMO-FL has received limited attention in prior work, established benchmarks are not available for direct comparison. To address this gap, we include both upper and lower bounds of learning performance and introduce a simple baseline that fills the missing modality with zero-padding. Detailed descriptions of the benchmarks are provided as following.

\textbf{Full Modality (FM)}. In this ideal scenario, all clients are presumed to collect data across all modalities through their respective sensors, ensuring that no modalities are missing in any global round.

\textbf{Partial Modality (PM)}. In this scenario, the failure of certain sensors during some global rounds may prevent the collection of specific modal data. Consequently, the client is trained using only the remaining available modalities.

\textbf{Zero Filling (ZF)}. In this scenario, certain sensors may fail during some global rounds, resulting in the inability to collect specific modal data. To address this issue, a straightforward baseline approach of zero-filling is employed to substitute for the missing modalities with zero-valued representations.

Based on the simulation setup described above, we present the following experimental results.

\subsection{Simulation Results}
In this section, we present the experimental results of MMO-FL. We begin with a comparative analysis of the proposed PMM algorithm against several benchmarks, evaluating overall learning performance. To further validate the effectiveness of PMM, we conduct four ablation studies. First, we investigate the effect of varying the modality missing rate on learning performance. Second, we study the effect of different Non-IID level setting. Third, we evaluate the performance implications and communication efficiency gained through the use of quantized prototype uploads. Final, we examine the impact of the delayed update strategy, assessing both its influence on performance and its potential for reducing communication and computational overhead. All simulation results are averaged over 10 random runs.

\textbf{Performance Comparison}. We begin by evaluating the learning performance of the proposed PMM algorithm compared to benchmarks in the presence of missing modalities within the MMO-FL scenario. The test accuracy comparison between PMM and benchmarks are shown in Fig.~\ref{mmofl-pc}(a) and Fig.~\ref{mmofl-pc}(b) based on UCI-HAR dataset with configuration $[\lambda = 0.5, \alpha = 10]$ and MVSA-Single dataset with $[\lambda = 0.5, \alpha = 1]$, respectively. Based on the above figures, we get several key observations. First, we observe that PMM significantly outperforms both the PM and ZF benchmarks in terms of test accuracy. Remarkably, PMM even achieves performance superior to the FM setting in both cases, indicating that the prototypes generated by PMM not only compensate for the missing modalities but also serve as robust substitutes that enhance overall learning. Second, we observe that the ZF yields better performance than the PM case, suggesting that explicitly preserving the structure of the missing modality even with zero-filled inputs can be more beneficial than entirely omitting the modality during training. Third, we observe that the performance of PMM improves over time, initially lagging behind the FM but eventually surpassing it in the later stages. This behavior highlights the effectiveness of the PMM, which incrementally refines prototype representations over time. As the training progresses, these prototypes become increasingly representative of each class and modality, thus outperforming the raw modality data in guiding the learning process.

\begin{figure}[h]
\centering
\vspace{-15pt}
\subfloat[Test Accuracy with UCI-HAR ]{\includegraphics[width=0.49\linewidth]{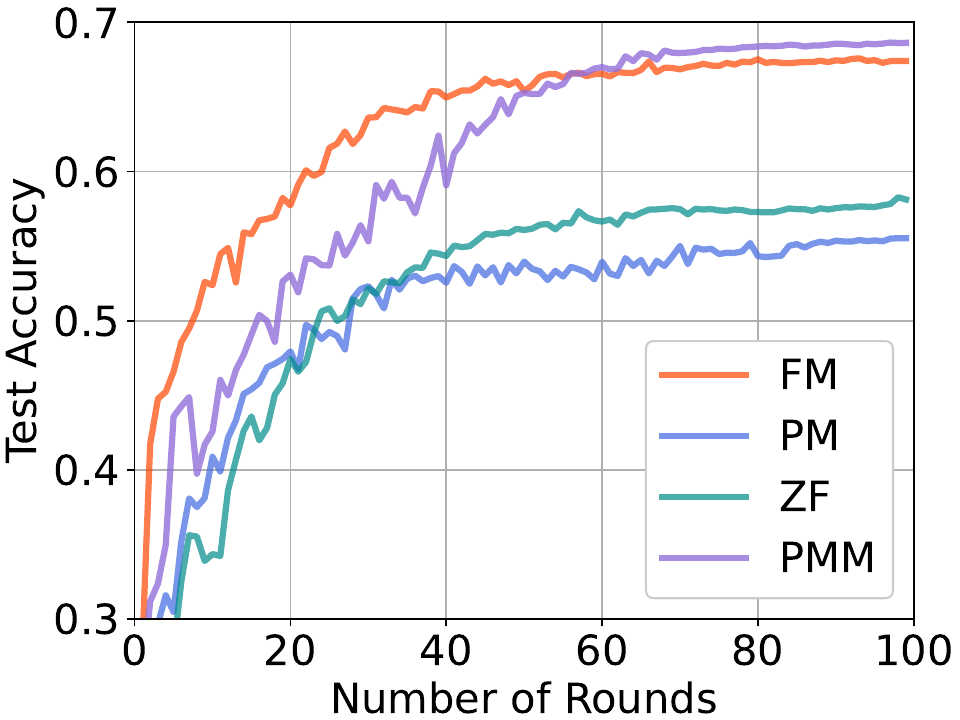}} 
\subfloat[Test Accuracy with MVSA-Single ]{\includegraphics[width=0.49\linewidth]{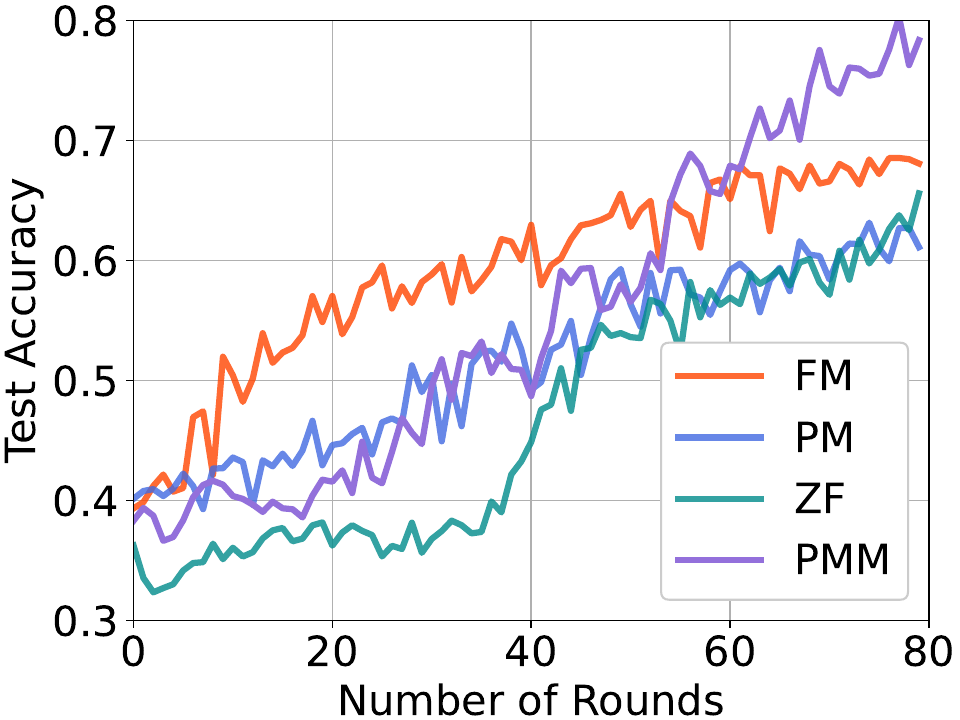}} 
\caption{Performance comparison of proposed algorithm and benchmarks with modality missing.}   \label{mmofl-pc}
\vspace{-10pt}
\end{figure}

Building on the performance comparison results presented above, which demonstrate the effectiveness of the proposed PMM algorithm in the MMO-FL setting, we now proceed with a series of ablation studies to analyze the specific impact of key parameters on learning performance.

\textbf{Impact of Modality Missing Rate}. In this part, we will explore the effects with the modality missing rate $\lambda$ on learning performance. The results for the UCI-HAR and MVSA-Single datasets are presented in Fig.~\ref{mmofl-mmr}(a) and Fig.~\ref{mmofl-mmr}(b), respectively. Several key observations can be made: First, a smaller value of $\lambda$ consistently leads to better learning performance. This improvement is attributed to two factors. On one hand, a lower missing rate results in more rounds where the persistent global prototypes can be accurately updated, allowing them to better represent the underlying modality distributions. On the other hand, fewer occurrences of missing modalities naturally reduce information loss during training, thereby enhancing performance. Second, when $\lambda = 0.7$, corresponding to a high frequency of missing modalities, we observe a significant performance drop in both datasets. This indicates that the frequency of modality absence plays a critical role in determining model effectiveness, and excessive modality missing can severely degrade learning performance.

\begin{figure}[h]
\centering
\vspace{-15pt}
\subfloat[Test Accuracy with UCI-HAR ]{\includegraphics[width=0.49\linewidth]{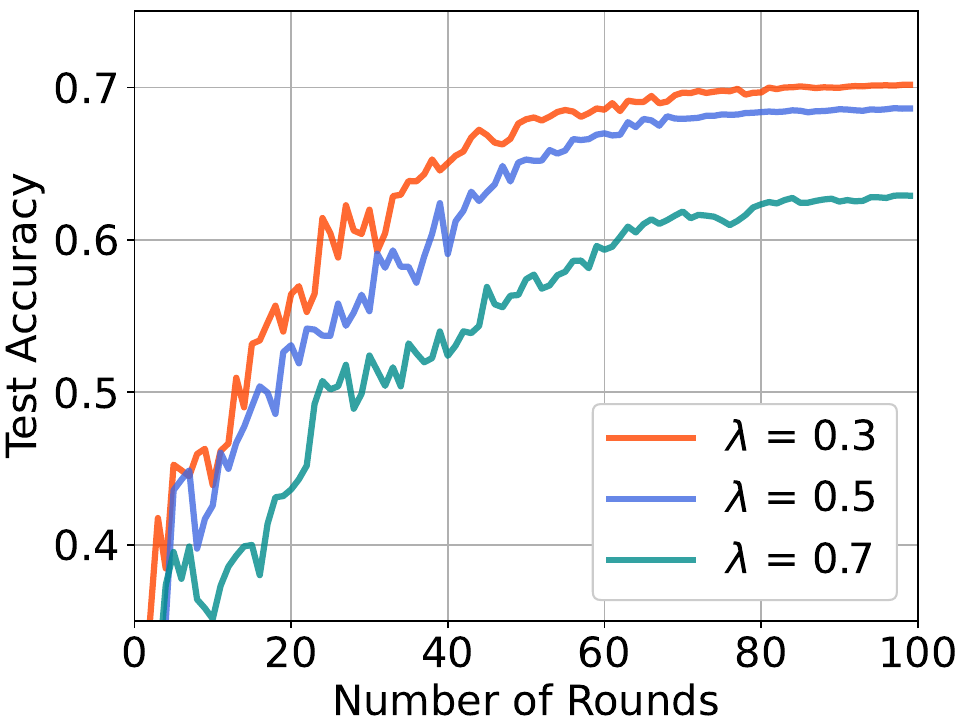}} 
\subfloat[Test Accuracy with MVSA-Single ]{\includegraphics[width=0.49\linewidth]{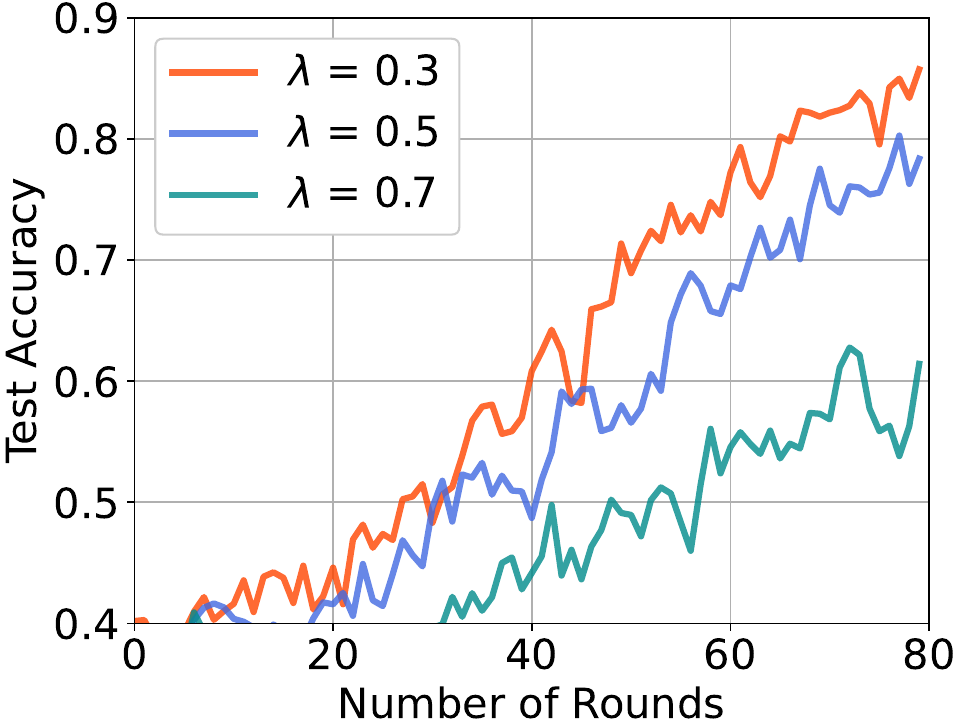}} 
\caption{Performance comparison of proposed algorithm with different modality missing rate.}   \label{mmofl-mmr}
\vspace{-10pt}
\end{figure}

\textbf{Impact of Non-IID Level}. In this part, we will explore the effects with the Non-IID level $\alpha$ on learning performance. The corresponding are illustrated in Fig.~\ref{mmofl-dc}(a) and Fig.~\ref{mmofl-dc}(b), respectively. Here, a larger value of $\alpha$ indicates a lower degree of data heterogeneity. As expected, we observe that increasing $\alpha$ leads to improved learning performance in both datasets, which aligns with the well-established understanding of Non-IID effects in conventional FL. Interestingly, we also observe that the impact of $\alpha$ on performance is less pronounced than that of the modality missing rate $\lambda$. This suggests that the use of prototype-based compensation in PMM may help alleviate the adverse effects of data heterogeneity, thereby maintaining robust performance across varying degrees of Non-IID.

\begin{figure}[h]
\centering
\vspace{-15pt}
\subfloat[Test Accuracy with UCI-HAR ]{\includegraphics[width=0.49\linewidth]{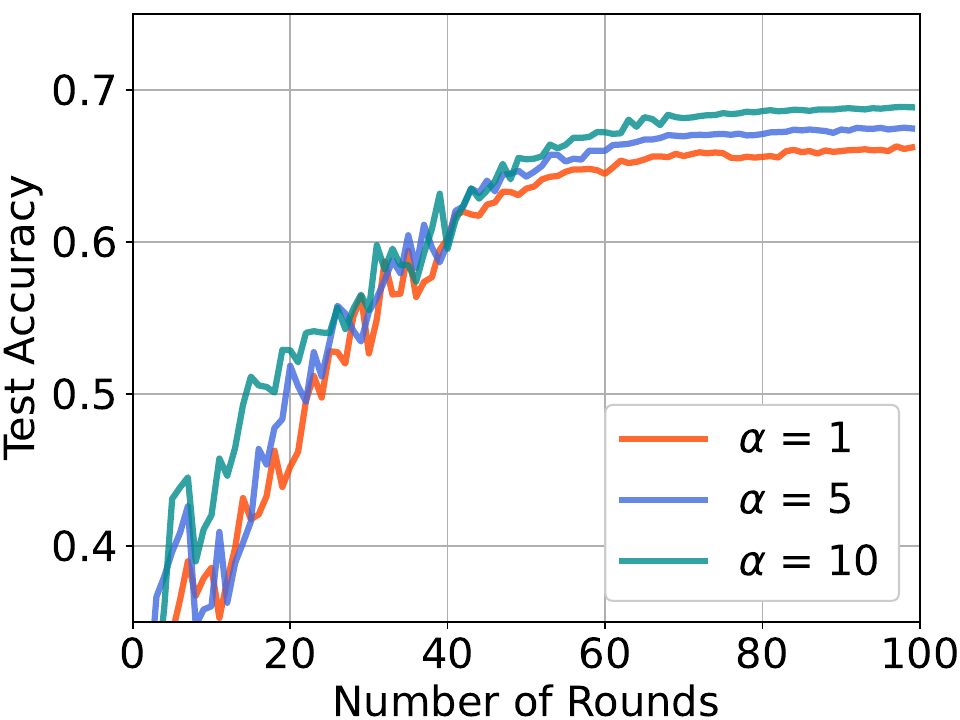}} 
\subfloat[Test Accuracy with MVSA-Single ]{\includegraphics[width=0.49\linewidth]{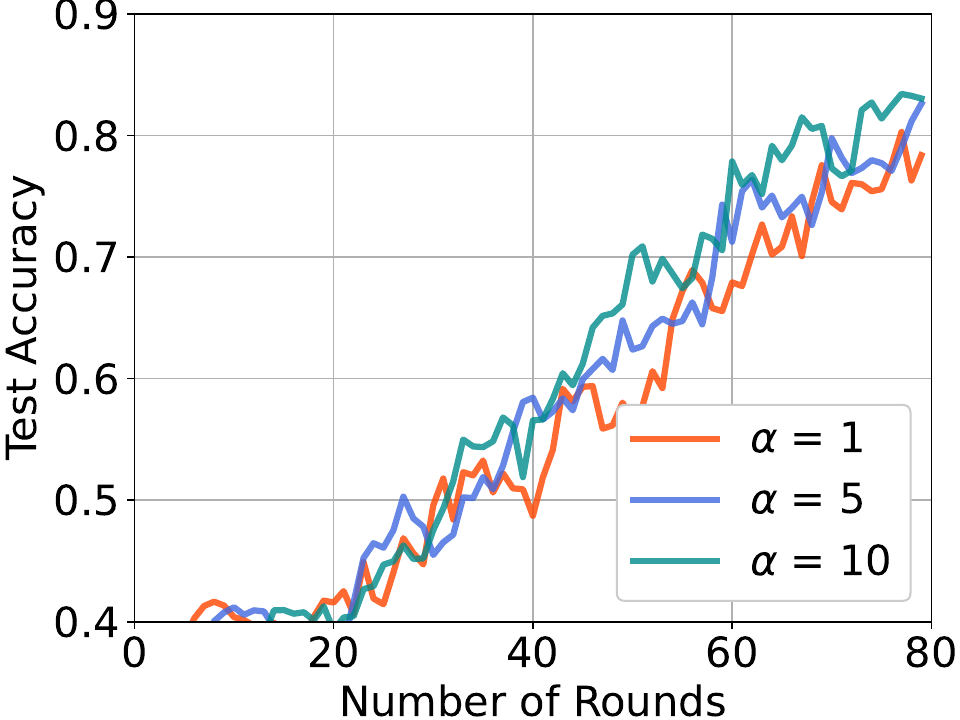}} 
\caption{Performance comparison of proposed algorithm with different Non-IID level.}   \label{mmofl-dc}
\vspace{-10pt}
\end{figure}

Next, we evaluate the efficiency of the proposed PMM algorithm by examining the quantized upload and the delayed update strategies, which were previously introduced in the remarks of the PMM algorithm section. 

\textbf{Impact of Quantized Upload}. In this part, we investigate the impact of applying quantization to local prototype during the PMM upload process as a means to further reduce communication overhead. Specifically, we adopt an uniform scalar quantizer. The parameter $b$ denotes the number of bits per component used for compression, when quantization is not applied, $b = 32$, corresponding to full-precision transmission. Applying quantization with $b$ bits yields a total of $2^b$ quantization levels. We evaluate the effect of varying $b \in [2, 4]$ on learning performance. The results, as illustrated in Fig.~\ref{mmofl-quanti}(a) and Fig.~\ref{mmofl-quanti}(b) by round, and Fig.~\ref{mmofl-quanti}(c) and Fig.~\ref{mmofl-quanti}(d) by communication cost, reveal the trade-off between model performance and communication efficiency that with the effect of quantization. As expected, larger values of $b$ lead to improved performance but incur higher communication costs. Importantly, we observe that the prototypes do not require high precision to be effective. As a result, quantization introduces only a minor reduction in learning performance, while significantly lowering communication overhead. This demonstrates that PMM can be efficiently integrated into the MMO-FL framework without introducing substantial communication burden.

\begin{figure}[h]
\centering
\vspace{-15pt}
\subfloat[Test Accuracy with UCI-HAR ]{\includegraphics[width=0.49\linewidth]{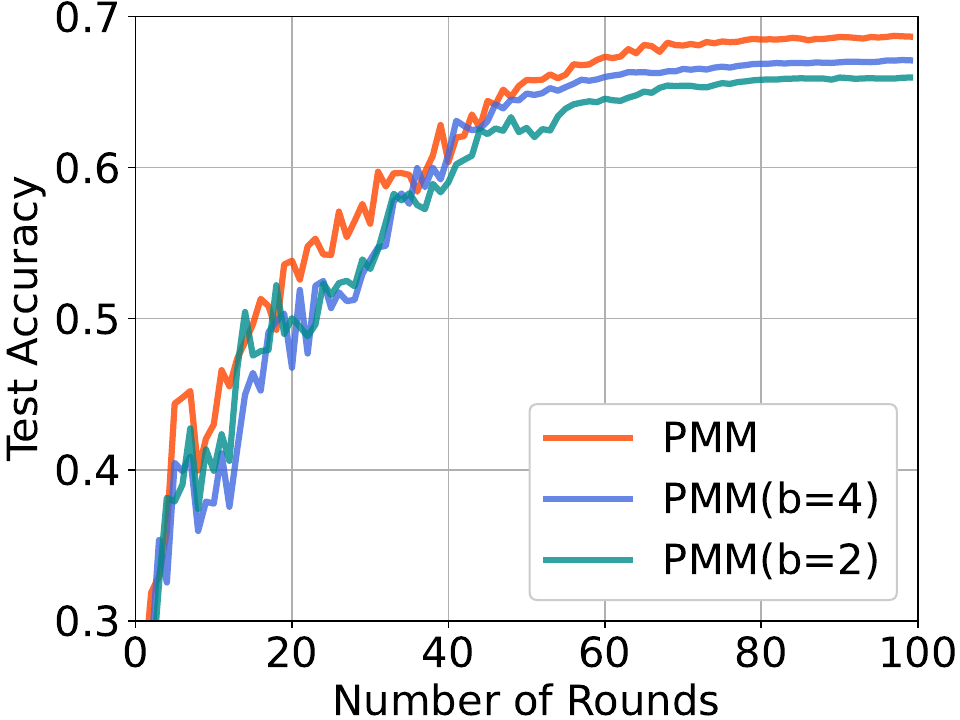}} 
\subfloat[Test Accuracy with MVSA-Single ]{\includegraphics[width=0.49\linewidth]{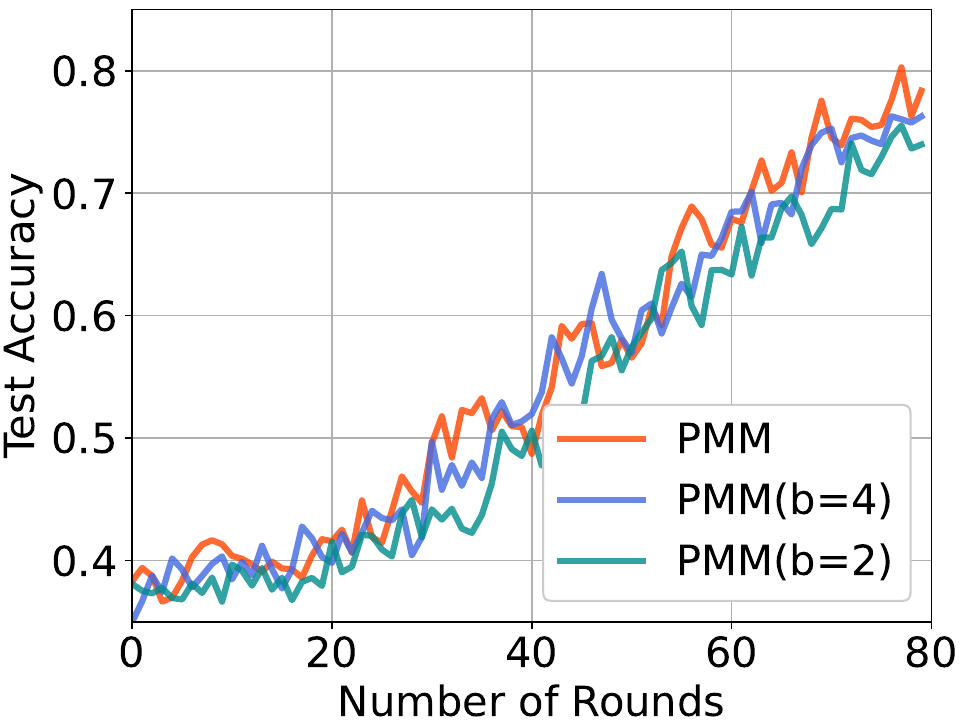}} \\
\subfloat[Test Accuracy with UCI-HAR by communication cost for PMM]{\includegraphics[width=0.49\linewidth]{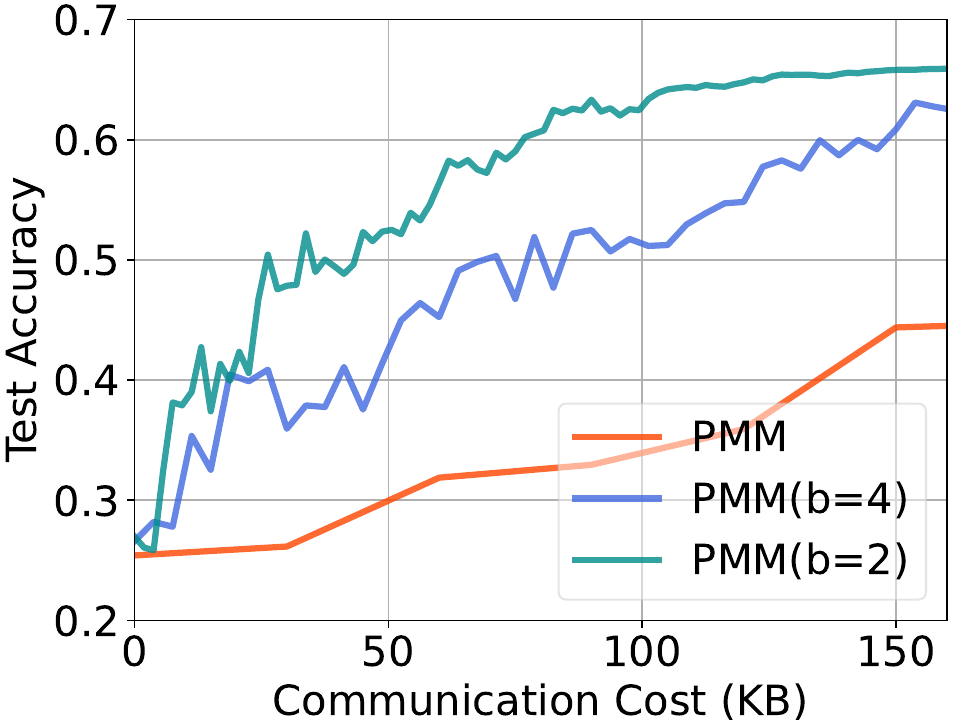}} 
\subfloat[Test Accuracy with MVSA-Single by communication cost for PMM]{\includegraphics[width=0.49\linewidth]{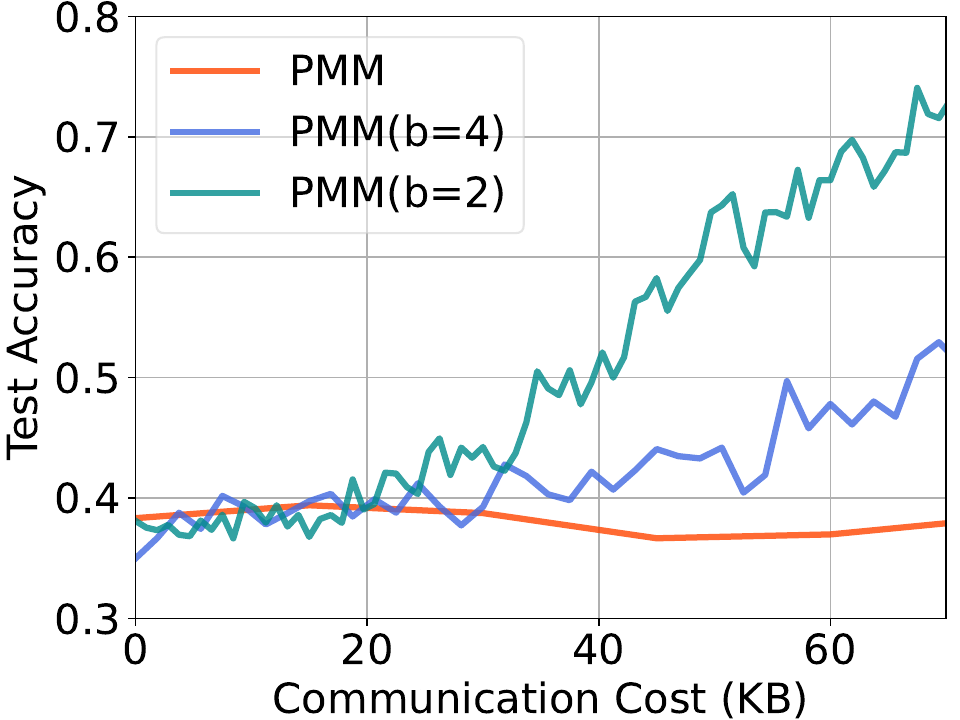}}
\caption{Performance comparison of proposed algorithm with different level of quantization.}   \label{mmofl-quanti}
\vspace{-15pt}
\end{figure}

\textbf{Impact of Delayed Update}. In this section, we evaluate the impact of applying delayed updates to the PMM algorithm. We investigate the impact of different update intervals by varying the frequency of updates with interval values of $[0, 2, 4]$, where a modality must accumulate the corresponding number of occurrences before triggering an update. The experimental results based on the UCI-HAR and MVSA-Single datasets are shown in Fig.~\ref{mmofl-delay}(a) and Fig.~\ref{mmofl-delay}(b). Our findings show that while delayed updates lead to a moderate decrease in performance, the PMM algorithm still maintains satisfactory learning outcomes. Additionally, In Fig.~\ref{mmofl-delay}(c) and Fig.~\ref{mmofl-delay}(d) we report the corresponding computation cost, which closely aligns with communication cost since communication is only triggered after computation. These results highlight an inherent trade-off: updating at every round yields the best performance but incurs higher resource consumption, whereas adopting delayed updates reduce computational and communication overhead at the cost of slight performance degradation. This makes delayed updating a compelling option in resource-constrained environments.

\begin{figure}[h]
\centering
\vspace{-15pt}
\subfloat[Test Accuracy with UCI-HAR]{\includegraphics[width=0.49\linewidth]{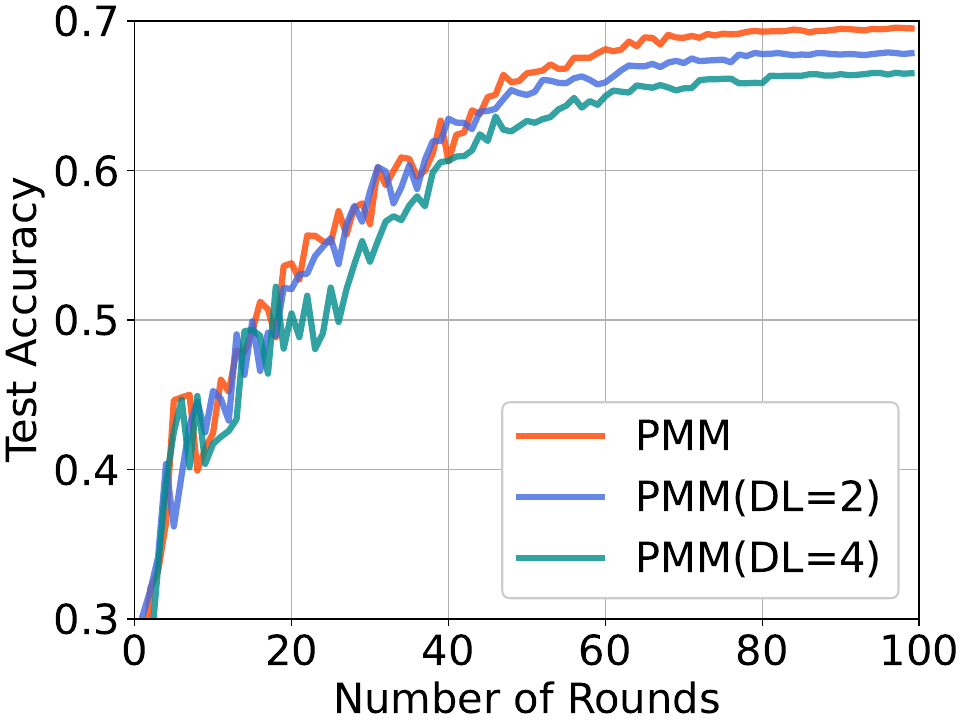}} 
\subfloat[Test Accuracy with MVSA-Single]{\includegraphics[width=0.49\linewidth]{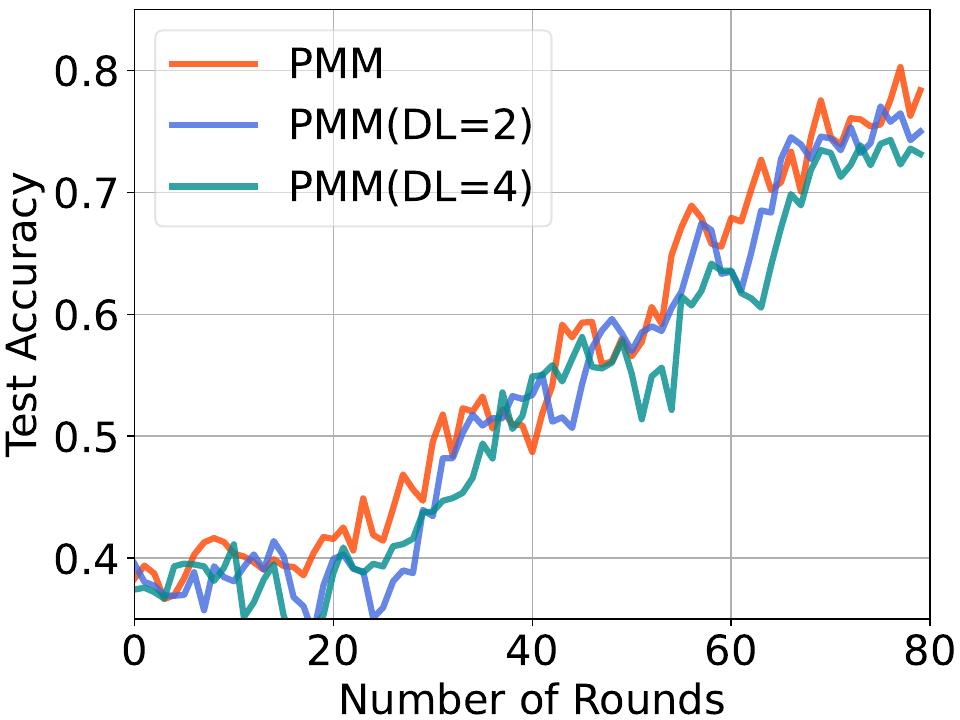}} \\
\subfloat[Test Accuracy with UCI-HAR by computation times for PMM]{\includegraphics[width=0.49\linewidth]{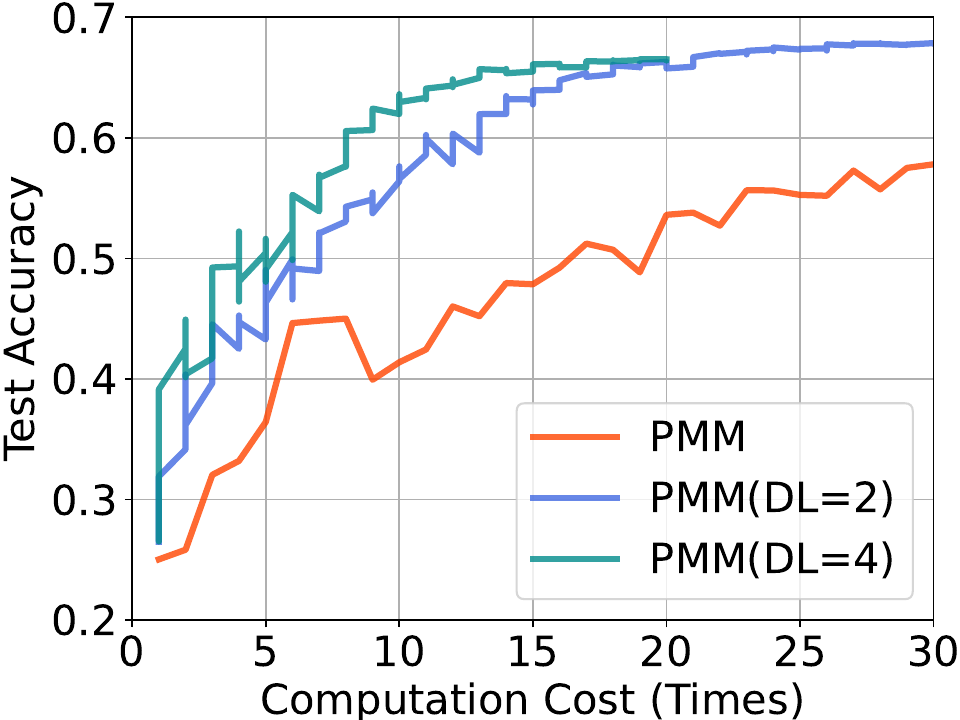}} 
\subfloat[Test Accuracy with MVSA-Single by computation times for PMM]{\includegraphics[width=0.49\linewidth]{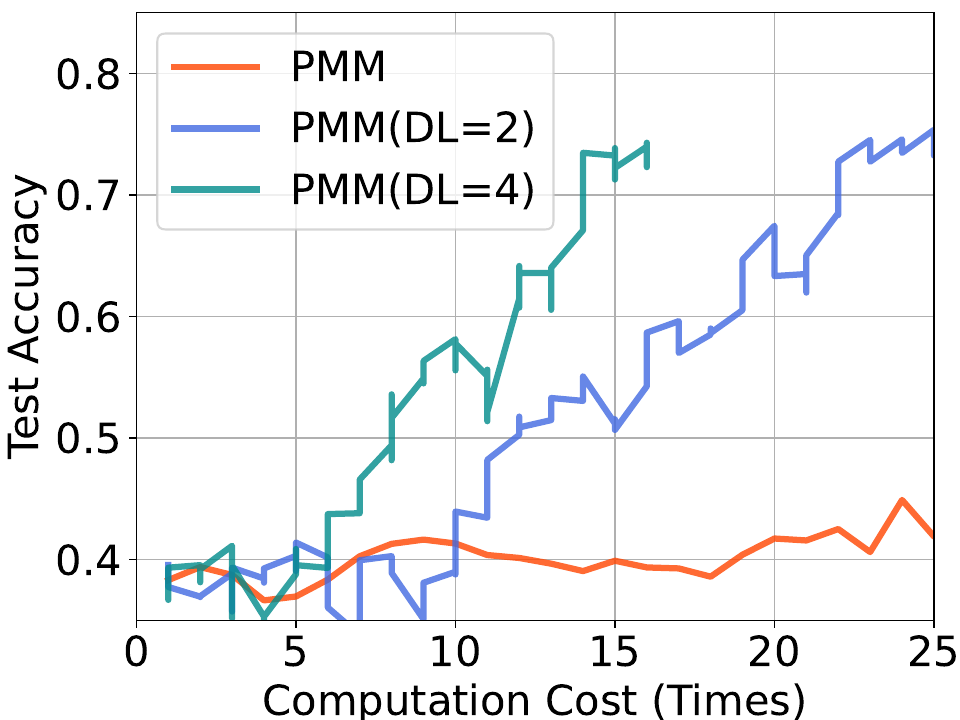}} 
\caption{Performance comparison of proposed algorithm with different level of delay.}   \label{mmofl-delay}
\vspace{-10pt}
\end{figure}

Through the series of experiments presented above, we have thoroughly evaluated the performance of the proposed PMM algorithm in comparison to benchmarks, as well as examined its scalability across various settings.

\section{Conclusion }
In this work, we propose the MMO-FL framework to address the challenges of distributed cooperative learning over real-time multimodal data in IoT environments with edge intelligence devices. Recognizing the instability of modality-specific sensors during continuous data collection, we explicitly consider and tackle the problem of modality missing throughout the learning process. To mitigate its impact, we introduce the PMM algorithm, which leverages prototype learning to effectively reconstruct and compensate for missing modal information. The proposed approach is supported by rigorous theoretical analysis and validated through extensive experiments. In future work, we aim to integrate real-time IoT multimodal data and develop practical testbeds to evaluate and enhance the MMO-FL framework. These efforts will further facilitate its deployment in real-world IoT applications.

\bibliographystyle{IEEEtran}
\bibliography{bibligraphy}

\end{document}